\newcommand{\id}{\operatorname{Id}}
\ificcvfinal\pagestyle{empty}\fi
\begin{document}

\title{Inverse problem regularization with hierarchical variational autoencoders\vspace{-0.2cm}}

\author{Jean Prost\\
Univ. Bordeaux, CNRS, Bordeaux INP\\
IMB, UMR 5251, F-33400 Talence, France\\
{\tt\small jean.prost@math.u-bordeaux.fr}
\and
Antoine Houdard\\
Ubisoft La Forge\\
Bordeaux\\
{\tt\small antoine.houdard@ubisoft.com}
\and
Andrés Almansa\\
Université Paris Cité, CNRS\\
 MAP5, UMR 8145\\
{\tt\small  andres.almansa@parisdescartes.fr}
\and 
Nicolas Papadakis\\
Univ. Bordeaux, CNRS, Bordeaux INP\\
IMB, UMR 5251, F-33400 Talence, France\\
{\tt \small  nicolas.papadakis@math.u-bordeaux.fr}
}

\maketitle
\ificcvfinal\thispagestyle{empty}\fi

\begin{abstract}
In this paper, we propose to regularize ill-posed inverse problems using a deep hierarchical variational autoencoder (HVAE) as an image prior. The proposed method synthesizes the advantages of i) denoiser-based Plug \& Play approaches and ii) generative model based approaches to inverse problems. First, we exploit VAE properties to design an efficient  algorithm that benefits from convergence guarantees of Plug-and-Play (PnP) methods. Second,  our approach is not restricted to specialized datasets and the proposed PnP-HVAE model is able to solve image restoration problems on natural images of any size. Our experiments show that the proposed PnP-HVAE method is competitive with both SOTA denoiser-based PnP approaches, and other SOTA restoration methods based on generative models. The code for this project is available at \href{https://github.com/jprost76/PnP-HVAE}{https://github.com/jprost76/PnP-HVAE}.
\vspace{-0.2cm}
\end{abstract}

\section{Introduction}
In this work, we study linear inverse problems %
\begin{equation}
    \label{eq:invprob}
 \y = A\x + \e
\end{equation}
in which  $\y \in \R^m$ is the degraded observation, $\x \in \R^d$ the original signal we wish to retrieve, $A\in \R^{m\times d}$ is an observation matrix
and $\e \sim \N(0, \sigma^2 I)$ is an additive Gaussian noise. 
Many image restoration tasks can be formulated as~\eqref{eq:invprob}, including deblurring, super-resolution or inpainting.
 
With the development of deep learning in computer vision, image restoration have known significant progress.
The most straight-forward way to exploit deep learning for solving image inverse problems is to train a neural network  to map degraded images to their clean version in a supervised fashion.
However, this type of approach requires a large amount of training data, and it lacks flexibility, as one network is needed for each different inverse problem.

An alternate approach is to use deep latent variable generative models such as GANs or VAEs and to compute the Maximum-a-Posterior (MAP) estimator in the latent space:
\begin{equation}
    \label{eq:map}
    \hat{\z} = \arg\max_{\z} \log p\left(\y|G(\z)\right) + \log p\left(\z\right),
\end{equation}
where $\z$ is the latent variable and $G$ is the generative network~\cite{bora2017compressed,menon2020pulse}.
In~\eqref{eq:map} the likelihood $p\left(\y|G(\z)\right)$ is related to the forward model~\eqref{eq:invprob}, and $p\left(\z\right)$ corresponds to the prior distribution over the latent space.
After solving~\eqref{eq:map}, the solution of the inverse problem is defined as $\hat{\x}=G\left(\hat{\z}\right)$.
The latent optimization methods~\eqref{eq:map} provide high-quality solutions that are guaranteed to be in the range of a generative network. %
However, this implies highly non-convex problems~\eqref{eq:map} due to the complexity of the generator %
and the obtained  solutions  may lack of consistency with the degraded observation~\cite{Saharia2021}.
Although  the convergence of latent optimization algorithms has been studied in the  literature, existing convergence guarantees are either restricted to specific settings, or rely on assumptions that are hard to verify.

In this work, we propose an algorithm that exploits the strong prior of a deep generative model while providing realistic convergence guarantees.
We  consider a specific type of deep generative model, the hierarchical variational autoencoder (HVAE).
 HVAE gives state-of-the-art results on image generation benchmarks~\cite{vahdat2020nvae, child2021very,hazami2022efficient,luhman2022optimizing}, and provides an encoder that will be key in the design of our proposed method.

As the HVAE  model differs significantly from the architecture of concurrent models, it is necessary to design algorithms adapted to their specific structure.
The latent space dimension of HVAE is significantly higher than the image dimension.
 Hence, constraining the solution to lie in the image of the generator is not sufficient enough to regularize inverse problems.
Indeed, it has been observed that HVAEs can perfectly reconstruct out-of-domain images~\cite{havtorn2021hierarchical}. 
Consequently, we propose to constrain the latent variable of the solution to lie in the high probability area of the HVAE prior distribution.
This can be done efficiently by controlling the variance of the prior over the latent variables.

The common practice of optimizing the latent variables of the generative model with backpropagation is impractical due to the high dimensionality of the hierarchical latent space. Instead, we exploit the HVAE encoder to define an alternating algorithm~\cite{gonzalez2022solving} to optimize the joint distribution over the image and its latent variable

To derive convergence guarantees for our algorithm, we show that it can be reformulated as a Plug-and-Play (PnP) method~\cite{venkatakrishnan2013plug}, which alternates between an application of the proximal operator of the data-fidelity term, and a reconstruction by the HVAE.
Under this perspective, we give sufficient conditions to ensure the convergence of our method, and we  provide an explicit characterization of the fixed-point
of the iterations. 
Motivated by the parallel with PnP methods, we name our method PnP-HVAE.

\subsection{Contributions and outline}
In this work, we introduce PnP-HVAE, a method for regularizing 
image restoration problems with a hierarchical variational autoencoder.
Our approach exploits the expressiveness of a deep HVAE generative model and its capacity to provide a strong prior on specialized datasets,  as well as convergence guarantees of Plug-and-Play  methods and their ability to 
deal with natural images of any size. 
After a review of related works (section~\ref{sec:related}) and of the  background on HVAEs (section~\ref{sec:back}), our contributions are the following. 
\newline\noindent$\bullet$   In section~\ref{sec:alternate}, we introduce PnP-HVAE, an algorithm to solve inverse problems with a HVAE prior.  
PnP-HVAE optimizes a joint posterior on image and latent variables without backpropagation through the generative network. It 
can be viewed as a generalization of JPMAP~\cite{gonzalez2022solving} to hierarchical VAEs,  with additional control of the regularization.
\newline\noindent$\bullet$   In section~\ref{sec:convergence}, we demonstrate the convergence of PnP-HVAE under hypotheses on the autoencoder reconstruction. Numerical experiments illustrate that the technical hypotheses are empirically met on noisy images with our proposed architecture. We also  exhibit the  better convergence properties of our alternate algorithm with respect to the use of Adam for optimizing the joint posterior objective.
\newline\noindent$\bullet$   In section~\ref{sec:expe}, we demonstrate the effectiveness of PnP-HVAE through image restoration experiments and comparisons on
(i) faces images using the pre-trained VDVAE model from~\cite{child2021very}; and (ii) natural images using the proposed PatchVDVAE architecture trained on natural image patches. %

\section{Related works}\label{sec:related}
CNN methods for regularizing inverse problems can be classified in two categories: regularization with denoisers (Plug-and-Play) and regularization with generative models. %

\subsection{Plug-and-Play methods} 

Plug-and-Play (PnP) and RED methods~\cite{venkatakrishnan2013plug,romano2017little} make use of a (deep) denoising method as a proxy to encode the local information over the prior distribution.
The denoiser is plugged in an optimization algorithm such as Half-Quadratic Splitting or ADMM in order to solve the inverse problem.
PnP algorithms come with theoretical convergence guarantees by imposing certain conditions on the denoiser network~\cite{ryu2019plug, pesquet2021learning, hurault2022gradient}.
These approaches provide state-of-the-art results on a wide variety of image modality thanks to the excellent performance of the currently available deep denoiser architectures~\cite{zhang2021plug}. 
However, PnP methods are only implicitly related to a probabilistic model, and they provide limited performance for challenging structured problems such as the inpainting of large occlusions.

\subsection{Deep generative models for inverse problems}
Generative models represent an explicit  image prior that can  be used to regularize ill-posed inverse problems~\cite{bora2017compressed, latorre2019fast, menon2020pulse, daras2021intermediate,  oberlin2021regularizing, pan2021exploiting,song2021solving}. 
They are latent variable models parametrized by neural networks, optimized to fit a training data distribution~\cite{kingma2013auto, goodfellow2020generative, dinh2016density, ho2020denoising}.

\noindent
{\bf  Convergence issues.} 
Regularization with generative models~\eqref{eq:map} %
 involves highly non-convex optimization over  latent variables \cite{bora2017compressed,menon2020pulse,oberlin2021regularizing}. %
The convergence guarantees of existing methods remain to be established~\cite{shah2018solving,raj2019gan,holden2022bayesian}. %

Convergent methods have only been proposed for restricted uses cases.
In compressed sensing problems with Gaussian measurement matrices, one can show that the objective function has a few critical points and design an algorithm to find the global optimum~\cite{hand2018phase,huang2021provably}. With a prior given by a VAE, and under technical hypothesis on the encoder and the VAE architecture, the Joint Posterior Maximization  with Autoencoding Prior (JPMAP) framework of~\cite{gonzalez2022solving} %
 converges toward a minimizer of the joint posterior $p(\x,\z|\y)$  of the image $\x$ and latent $\z$  given the  observation $\y$. 
 JPMAP is nevertheless only designed for VAEs of limited expressiveness, with a simple  fixed Gaussian prior distribution over the latent space. This makes it impossible to use this approach for anything other than toy examples.

\noindent
{\bf  Genericity issues.} When a highly structured dataset with fixed image size  is available ({\em e.g.}  face images~\cite{karras2019style}), deep generative models %
produce image restorations of impressive quality for severely ill-posed problems, such as %
super-resolution with huge upscaling factors.

For natural images, deep priors have \hbox{been improved by} %
normalizing flows \cite{rezende2015variational,dinh2016density}, GANs~\cite{goodfellow2020generative},  score-based  \cite{song2020score} and diffusion models \cite{ho2020denoising,Saharia2021,Saharia2022}. Note that the  use of  diffusion models in PnP is still based on  approximations~\cite{Song2023-pigdm,chung2022diffusion}, assumptions~\cite{Kawar2022,Meng2022} or empirical algorithms~\cite{Lugmayr2022}. 

While GAN models were considered SOTA, modern  hierarchical VAE (HVAE) architectures were shown to display  quality on par with GANs \cite{child2021very,vahdat2020nvae}, %
while being 
 able to perfectly reconstruct out-of-domain images~\cite{havtorn2021hierarchical}.

Integrating HVAE in a convergent scheme for natural image restoration of any size raises several theoretical and methodological challenges, as the image model of HVAE is  the push-forward of a {\em causal cascade of latent distributions}.

\section{Background on  variational autoencoders}\label{sec:back}
This work  exploits the capacity of HVAEs to model complex image distributions~\cite{child2021very,vahdat2020nvae}.
We %
 review the properties of the loss function used to train a VAE
(section~\ref{sec:perfect_vae}), then we detail the generalization of VAE to hierarchical VAE (section~\ref{ssec:hvae}), and present the temperature scaling approach to monitor the quality of generated images (section~\ref{ssec:temp}).

\subsection{VAE training} \label{sec:perfect_vae}

Variational autoencoders (VAE)  have been introduced in~\cite{kingma2013auto} to model complex data distributions. VAEs are trained to fit a parametric probability distribution in the form of a latent variable model:
\begin{equation}
    \pt{\x} = \int{\pt{\x|\z}\pt{\z}d\z},
\end{equation}
where $\pt{\x|\z}$ is a probabilistic decoder, and $\pt{\z}$ corresponds to the prior distribution over the model latent variable $\z$.
A VAE is also composed of a probabilistic encoder $\qp{\z|\x}$, whose role is to approximate the posterior of the latent model $\pt{\z|\x}$, which is usually intractable.

The generative model parameters $\theta\in\Theta$ and the approximate posterior parameters $\phi\in\Phi$ of a VAE are jointly trained by maximizing the evidence lower bound (ELBO)~\cite{kingma2013auto, rezende2014stochastic} on a training data distribution $\pd{\x}$.
\begin{equation}
    \label{eq:elbo}
    \mathcal{L}\left(\x; \theta, \phi\right)\hspace{-1pt} =\hspace{-1pt} \E{\qp{\z|\x}}{\log\pt{\x|\z}} - \KL{\qp{\z|\x}}{\pt{\z}}.
\end{equation}
The ELBO expectation on $\pd{\x}$ is upper-bounded by the negative entropy of the data distribution,
and, when the upper-bound is reached we have that~\cite{zhao2017learning}:
\begin{equation}
    \label{eq:perfect_vae}
    \KL{\pd{\x}\qp{\z|\x}}{\pt{\x}\pt{\z|\x}} = 0.
\end{equation}

\subsection{Hierarchical variational autoencoders}\label{ssec:hvae}

The ability of hierarchical VAEs to model complex distributions is due to their hierarchical structure imposed in the latent space.
 The latent variable of HVAE is partitioned into $L$ subgroups $\z = (\z_0, \z_1, \cdots, \z_{L-1})$, and the prior and the encoder are respectively defined as: 
\begin{align}
    \pt{\z} &= \prod_{l=1}^{L-1} \pt{\zl|\z_{<l}}\pt{\z_0}  \label{eq:hierarchical_prior}\\
    \qp{\z|\x} &= \prod_{l=1}^{L-1} \qp{\zl|\z_{<l},\x}\qp{\z_0|\x},    \label{eq:hierarchical_encoder}
\end{align}
We consider a specific class of HVAEs with Gaussian conditional distributions for the encoder and the decoder
\begin{equation}
    \label{eq:gaussian_hvae}
    \hspace*{-0.25cm}\begin{cases}
        \pt{\zl|\z_{<l}} &= \mathcal{N}\!\left(\zl; \mutl{\z_{<l}}\hspace{-1pt}\hspace{-1pt}\Sigma_{\theta, l}(\z_{<l})\right) \\
        \qp{\zl|\z_{<l}, \x}\mkern-18mu &= \mathcal{N}\!\left(\zl; \mupl{\z_{<l}, \x}\hspace{-1pt},\hspace{-1pt} \Sigma_{\phi, l}(\z_{<l}, \x)\right)\hspace{-1pt},
    \end{cases}
\end{equation} 
where $\mu_{\theta, 0}$ and $\Sigma_{\theta, 0}$ can either be trainable or non-trainable constants, and the remaining mean vectors ($\mu_{\theta,l}$, and $\mu_{\phi,l}$, for $l>0$) and covariance matrices ($\Sigma_{\theta,l}$ and $\Sigma_{\phi,l}$, for $l>0$)  are parametrized by neural networks. \footnote{
	Note that for the special case $l=0$, $\z_{<l}$ is empty, meaning that 
	$\Sigma_{\theta, 0}(\z_{<0})=\Sigma_{\theta, 0}$ is actually a constant,
	$\qp{\z_0|\z_{<0},\x}=\qp{\z_0|\x}$ is only conditioned on $\x$, etc.
}
In this work, we consider models with a Gaussian decoder:
\begin{equation}
    \label{eq:decoder_constant}
    \pt{\x|\z} = \N\left(\x; \mut{\z}, \gamma^2 I\right).
\end{equation}

\subsection{Temperature scaling}\label{ssec:temp}
As demonstrated in~\cite{vahdat2020nvae,child2021very}, sampling the latent variables $\zl$ from a prior with reduced temperature improves the visual quality of the generated images from the VAE.
In practice, this is done by multipling the covariance matrix of the Gaussian distribution $\pt{\zl|\z_{<l}}$ by a factor $\tau_l<1$.
This factor $\tau_l$ is called temperature because of its link to statistical physics.
Reducing the temperature of the priors amounts to defining the auxilliary model:
\begin{equation}
    \label{eq:hvae_joint_model_temp}
    p_{\theta, \tauvec}\left(\z_0, \cdots, \z_{L-1}, \x\right) = \prod_{l=0}^{L-1} \frac{\pt{\zl|\z_{<l}}^{\frac{1}{\tau_l^2}}}{Z_l}\pt{\x|\z_{<L}},
\end{equation}
where $\tauvec := \left(\tau_0, \cdots, \tau_{L-1}\right)$ gives the temperature for each level of the  hierarchy, 
and the $Z_l$ are normalizing constants.
In the following, we use this  temperature-scaled model to balance the regularization of our inverse problem.

\section{Regularization with HVAE Prior}
\label{sec:alternate}
In this section  we introduce our  Plug-and-Play method using a Hierarchical VAE  prior (PnP-HVAE) to solve generic image inverse problems. Building on top of the JPMAP framework~\cite{gonzalez2022solving}, we propose a joint model  over  the image and its latent variable that we optimize in an alternate way. By doing so, we take advantage of the HVAE encoder to avoid backpropagation through the generative network.
Although our motivation is similar to JPMAP, PnP-HVAE overcomes two of its limitations, that are the lack of control, and the limitation to simple and non-hierarchical VAEs.
We show in section~\ref{ssec:HJP} that the strength of the regularization of the tackled inverse problem can be monitored by tuning the temperature of the prior in the latent space. In section~\ref{ssec:encoder}, we  propose an approximation of the joint posterior distribution using the hierarchical VAE encoder. 
In section~\ref{ssec:optim}, we present our final algorithm that includes a new greedy scheme to optimize the latent variable of the HVAE.

\subsection{Tempered hierarchical joint posterior}\label{ssec:HJP}
The linear image degradation model~\eqref{eq:invprob} yields %
\begin{equation}
    \label{eq:f}
    p(\y|\x) \propto e^{-f(\x)}, \quad f(\x) = \frac{1}{2\sigma^2}||A\x-\y||^2.
\end{equation}
Solving the underlying image inverse problem in a bayesian framework requires an a priori distribution $p(\x)$ over clean images and studying the posterior distribution of $\x$ knowing its degraded observation $\y$.
In this work, the image prior is given by a hierarchical VAE and we exploit the joint posterior model $p(\z,\x|\y)$. 
From the HVAE latent variable model with reduced temperature $p_{\theta, \bf{\tau}}\left(\z_0, \cdots, \z_{l-1}, \x\right)$ as defined in \eqref{eq:hvae_joint_model_temp},
we define the associated tempered joint model as:
\begin{equation}
    \label{eq:full_joint_model}
    p\left(\z%
    , \x, \y\right) 
    := p_{\theta, \bf{\tau}}\left(\z_0, \cdots, \z_{L-1}, \x\right)p\left(\y|\x\right).
\end{equation}
Following the JPMAP idea from~\cite{gonzalez2022solving}, we aim at finding the couple $(\x,\z)$ that maximizes the joint posterior $p(\x, \z|\y)$: %
\begin{equation}
    \label{eq:joint_map}
    \min_{\x, \z} - \log p(\x, \z|\y) .
\end{equation}
Although we are only interested in finding the image $\x$,  the joint Maximum A Posteriori (MAP) criterion \eqref{eq:joint_map} makes it possible to derive an optimization scheme that only relies on forward calls of the HVAE. %
Using  Bayes' rule and the definition of the tempered HVAE joint model~\eqref{eq:full_joint_model} and~\eqref{eq:hvae_joint_model_temp},
the logarithm of the joint posterior rewrites:
\begin{align}
    &\log p(\x, \z|\y)+\log p(\y) \\ =&\log p(\y|\x)  + \sum_{l=0}^{L-1} \log\frac{\pt{\zl|\z_{<l}}^{\frac{1}{\tau_l^2}}}{Z_l} + \log \pt{\x|\z_{<L}}\nonumber.
\end{align}

Since $p(\y)$ is constant, 
finding the joint MAP estimate \eqref{eq:joint_map} amounts to minimizing the following criterion:
\begin{align}\nonumber
    J_1\left(\x, \z\right):=&- \sum_{l=1}^{L-1}\frac{1}{\tau_l^2}\log \pt{\zl|\z_{<l}}\\
    &+f(\x) -\log \pt{\x|\z_{<L}}.
    \label{eq:J1}
\end{align}
Notice that the temperature of the prior over the latent space $\tau_l$ controls the weight of the regularization over the latent variable $\zl$.
Optimizing~\eqref{eq:J1} w.r.t. $\x$ is tractable, whereas the minimization  w.r.t.$\z$ requires a backpropagation through the decoder $\log \pt{\x|\z_{<L}}$ that is impractical due to the high dimensionality and the hierarchical structure of the HVAE latent space.

\subsection{Encoder approximation of the joint posterior}\label{ssec:encoder}
Using the encoder $q_\phi$, we can reformulate the joint MAP problem~\eqref{eq:J1} in a form that is more  convenient to optimize with respect to $\z$.
Indeed, assuming that the encoder perfectly matches the true posterior, we have that:
\begin{equation}
    \pt{\x|\z} = \frac{\qp{\z|\x}\pd{\x}}{\pt{\z}} \label{eq:decoder_approx}.
\end{equation}
This assumption can be met if the variational family $\{\qp{.|\x}; \phi \in \Phi\}$ contains the  true posterior $p(\z|\x)$ and is trained to optimality, following equation~\eqref{eq:perfect_vae}. If this assumption appears unrealistic for vanilla (non-hierarchical) VAE \cite{gonzalez2022solving}, our experiments suggest that HVAE is sufficiently expressive to match the posterior to a reasonably good accuracy.

Thus, by introducing the decoder expression~\eqref{eq:decoder_approx} in the full model~\eqref{eq:full_joint_model}, we have:
\begin{align}
    \nonumber
    p\left(\z, \x, \y\right)=
    \prod_{l=0}^{L-1} \frac{1}{Z_l}\frac{\qp{\zl|\z_{<l}, \x}}{\pt{\zl|\z_{<l}}^{1-\tau_l^{-2}}}p(\y|\x)\pd{\x}.
\end{align} 
Denoting $\lambda_l = \frac{1}{\tau_l^2}-1$, we reformulate the joint MAP problem~\eqref{eq:joint_map} as a joint MAP problem over the encoder model:
\begin{align}
    \nonumber
        \min_{\x,\z}\hspace{-1pt}J_2(\x, \z)\hspace{-2pt} :=&  \hspace{-1pt}-\hspace{-4pt}\sum_{l=0}^{L-1} \hspace{-3pt}\left(\log\qp{\zl|\x, \z_{<l}} \hspace{-2pt}+\hspace{-2pt} \lambda_l\hspace{-1pt}\log\pt{\zl|\z_{<l}}\hspace{-1pt}\right)\\&+f(\x)- \log\pd{\x}\label{eq:J2}.
\end{align}

\subsection{Alternate optimization with PnP-HVAE}\label{ssec:optim}
We introduce an alternate  scheme to minimize~\eqref{eq:joint_map} that  sequentially optimizes with respect to $\x$ and  to $\z$.
For a linear degradation model and a Gaussian decoder~\eqref{eq:invprob}, the criterion $J_1(\x, \z)$ in~\eqref{eq:J1} is convex in $\x$ and its global minimum is
$\x = \left(A^tA+\frac{\sigma^2}{\gamma^2}\id\right)^{-1}
        \hspace{-4pt}\left(
            A^t\y+\frac{\sigma^2}{\gamma^2}\mut{\z}\right)$. %
Next we propose to compute an approximate solution of the problem $\min_{\z} J_2(\x,\z)$ with the greedy %
 algorithm~\ref{algo:hierarchical_latent_reg}.
\begin{algorithm}
    \caption{Hierarchical encoding with latent regularization to minimize~\eqref{eq:J2} w.r.t. $\z$ for a fixed $\x$}\label{algo:hierarchical_latent_reg}
        \begin{algorithmic}\small
            \Require image $\x$; HVAE ($\phi,\theta$); temperature $\tau_l$;  $\lambda_l=\frac{1}{\tau_l^2}-1$

        \For{$0\leq l < L$}
      \State $S_q \gets \Vpil{{\z}_{<l}, \x}$; $m_q\gets\mupl{{\z}_{<l}, \x}$
        \Comment{Encoder}

        \State $S_p \gets \Vtil{{\z}_{<l}}$; $m_p\gets\mutl{{\z}_{<l}}$
        \Comment{Prior}
  
        \State $\zl\gets \left(S_q + \lambda_lS_p\right)^{-1} 
        \left(S_qm_q + \lambda_lS_pm_p\right)$       
        \EndFor
            \State \textbf{return} $\enc{\x}=(\z_{0},\z_{1}, \cdots, \z_{L-1})$
            \end{algorithmic}
\end{algorithm}
    
In algorithm~\ref{algo:hierarchical_latent_reg}, the latent variables $\zl$ are determined in a hierarchical fashion starting from the coarsest to the finest one. 
As defined in relations~\eqref{eq:gaussian_hvae}, the conditionals $\qp{\zl|\x, \z_{<l}}$ and $\pt{\zl|\z_{<l}}$ are Gaussian. Therefore, the minimization at each step can be viewed as an interpolation between the mean of the encoder $\qp{\zl|\x, \z_{<l}}$ and the prior $\pt{\zl|\z_{<l}}$, given some weights conditioned by the covariance matrices and the temperature $\tau_l$.
The solution of each minimization problem in $\zl$ is given by:
\begin{align}
    \hat{\zl} \hspace{-1pt}= &\hspace{-1pt}\left(\Vpil{\hat{\z}_{<l}, \x} + \lambda_l\Vtil{\hat{\z}_{<l}}\right)^{-1} \\
    &\hspace{-1pt}\left(\Vpil{\hat{\z}_{<l}, \x}\mupl{\hat{\z}_{<l}, \x} \hspace{-2pt}+ \hspace{-2pt}\lambda_l\Vtil{\hat{\z}_{<l}}\mutl{\hat{\z}_{<l}}\hspace{-1pt}\right)\nonumber
\end{align}
where $\lambda_l=1/{\tau_l^2}-1$. 
In the following, we denote as $\hat{\z} := \enc{\x}$ the output of the hierarchical encoding of Algorithm~\ref{algo:hierarchical_latent_reg}.
In appendix~\ref{sec:algo1-global-min} we show that this algorithm finds the global optimum of $J_2(\x,\cdot)$ under mild assumptions.

The final PnP-HVAE procedure to solve an inverse problem with the HVAE prior is presented in Algorithm~\ref{algo:final}.
\begin{algorithm}[!h]
    \caption{PnP-HVAE - Restoration  by solving~\eqref{eq:J1}}\label{algo:final}
    \begin{algorithmic}\small
        \State $k \gets 0$; $res \gets + \infty$; 
        initialize $\x^{(0)}$
        \While{$res > tol$}
\State \textcolor{gray}{\% $\min_{\z} J_2(\x^{(k)},\z)$}
	\Comment{Optimize~\eqref{eq:J2} w.r.t. $\z$ using Alg. \ref{algo:hierarchical_latent_reg}}
	\State $\z^{(k+1)} = E_{\tauvec} (\x^{(k)})$

        \State \textcolor{gray}{\% $\min_{\x} J_1(\x,\z^{(k+1)})$}
        \Comment{Optimize~\eqref{eq:J1} w.r.t. $\x$}
        \State $\x^{(k+1)} = \left(A^tA+\frac{\sigma^2}{\gamma^2}\id\right)^{-1}
        \hspace{-4pt}\left(
            A^t\y+\frac{\sigma^2}{\gamma^2}\mut{\z^{(k+1)}}
        \right)$ 
        \State $res \gets ||\x^{(k+1)}-\x^{(k)}||$;  $k \gets k+1$
        \EndWhile
        \State \Return $\x^{(k)}$
    \end{algorithmic}
\end{algorithm}

\section{Convergence analysis}\label{sec:convergence}
We now analyse the convergence of Algorithm~\ref{algo:final}. Following the work of~\cite{attouch2010proximal},  the alternate optimization scheme converges if  $\qp{\z|x}=\pt{\z|\x}$ and the greedy optimization scheme in Algorithm~\ref{algo:hierarchical_latent_reg} actually solves $\min_{\z} %
J_2(\x, \z)$. %
In practice, it is difficult to verify if these hypotheses hold. 
We propose to theoretically study algorithm~\ref{algo:final}, and next verify empirically that the assumptions are met.

In section~\ref{ssec:pnp}, we reformulate Algorithm~\ref{algo:final} as a Plug-and-Play algorithm, where the HVAE reconstruction takes the role of the denoiser. 
Then we study in section~\ref{ssec:fp_conv} the fixed-point convergence of the algorithm. %
Finally, section~\ref{ssec:num_conv} contains numerical experiments with the patch HVAE architecture proposed in section~\ref{ssec:patchVDVAE}. We empirically show that the patch architecture satisfies the aforementioned technical  assumptions and then illustrate the numerical  convergence and the stability of our alternate algorithm.

\subsection{Plug-and-Play HVAE}\label{ssec:pnp}
In this section we  make the assumption that the HVAE decoder is Gaussian with a constant variance on its diagonal~\eqref{eq:decoder_constant}.
We rely on the proximal operator of a convex function~$f$ that is defined as $\prox_f(\x) = \arg\min_{\bm{u}} f(\bm{u}) + \frac{1}{2}||\x - \bm{u}||^2$.

\begin{proposition}
    \label{prop:pnp_vae}
    Assume the decoder is defined as in~\eqref{eq:decoder_constant}.
    Denote $\HVAE(\x, \tauvec) := \mut{\enc{\x}}$. Then the alternate scheme described in Algorithm~\ref{algo:final} writes
    \begin{equation}
        \label{eq:pnp-vae}
        \x_{k+1} = \prox_{\gamma^2f}\left(\HVAE\left(\x_k, \tauvec\right)\right).
    \end{equation}
\end{proposition}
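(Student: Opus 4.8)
The plan is to directly match the two updates performed in each iteration of Algorithm~\ref{algo:final} to the two operations that compose the right-hand side of~\eqref{eq:pnp-vae}: the encode--decode map $\HVAE(\cdot,\tauvec)$ and the proximal operator $\prox_{\gamma^2 f}$. Since the statement is a reformulation rather than a new quantitative result, the whole argument reduces to a chain of identities.

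First I would dispose of the $\z$-step. By construction the latent update sets $\z^{(k+1)} = E_{\tauvec}(\x^{(k)}) = \enc{\x^{(k)}}$, the hierarchical encoding returned by Algorithm~\ref{algo:hierarchical_latent_reg}. Feeding this latent code through the decoder mean yields $\mut{\z^{(k+1)}} = \mut{\enc{\x^{(k)}}}$, which is exactly $\HVAE(\x^{(k)},\tauvec)$ by the definition given in the statement. Hence the vector that enters the subsequent $\x$-update is precisely $\HVAE(\x^{(k)},\tauvec)$, and it only remains to recognize the $\x$-update as a proximal step.

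Next I would show that the $\x$-step equals $\prox_{\gamma^2 f}$ applied to that vector. Writing $\bm{v} := \HVAE(\x^{(k)},\tauvec)$ and using the data term $f$ from~\eqref{eq:f}, the proximal objective $\gamma^2 f(\bm{u}) + \tfrac{1}{2}||\bm{v}-\bm{u}||^2 = \tfrac{\gamma^2}{2\sigma^2}||A\bm{u}-\y||^2 + \tfrac{1}{2}||\bm{v}-\bm{u}||^2$ is a strictly convex quadratic in $\bm{u}$, hence it admits a unique minimizer characterized by a vanishing gradient. Setting the gradient to zero gives the normal equation $\left(\tfrac{\gamma^2}{\sigma^2}A^tA + \id\right)\bm{u} = \tfrac{\gamma^2}{\sigma^2}A^t\y + \bm{v}$; multiplying through by $\sigma^2/\gamma^2$ produces the closed form $\bm{u} = \left(A^tA + \tfrac{\sigma^2}{\gamma^2}\id\right)^{-1}\left(A^t\y + \tfrac{\sigma^2}{\gamma^2}\bm{v}\right) = \prox_{\gamma^2 f}(\bm{v})$. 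Substituting $\bm{v} = \mut{\z^{(k+1)}}$ reproduces verbatim the $\x$-update of Algorithm~\ref{algo:final}, and chaining the two steps gives $\x^{(k+1)} = \prox_{\gamma^2 f}\!\left(\HVAE(\x^{(k)},\tauvec)\right)$, which is~\eqref{eq:pnp-vae}.

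I do not expect a genuine obstacle here: the single computational step---identifying the linear--Gaussian $\x$-update with a proximal map---follows from solving a quadratic. The only points deserving care are the bookkeeping of the ratio $\sigma^2/\gamma^2$ when converting the normal equation into matrix-inverse form, and the observation that $A^tA + \tfrac{\sigma^2}{\gamma^2}\id$ is symmetric positive definite (since $\gamma^2,\sigma^2>0$), so that it is invertible and the proximal operator is single-valued, making~\eqref{eq:pnp-vae} well defined.
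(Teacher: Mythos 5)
Your proposal is correct and matches the paper's (largely implicit) justification: the paper treats this proposition as a direct reformulation, with the $\x$-update of Algorithm~\ref{algo:final} being precisely the closed-form minimizer of $\gamma^2 f(\bm{u}) + \tfrac{1}{2}\|\bm{u}-\mut{\z^{(k+1)}}\|^2$ and $\z^{(k+1)} = \enc{\x^{(k)}}$ making the decoded mean equal to $\HVAE(\x^{(k)},\tauvec)$. Your verification via the normal equations, including the remark that $A^tA + \tfrac{\sigma^2}{\gamma^2}\id$ is positive definite and hence the prox is single-valued, is exactly the intended argument.
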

From relation~\eqref{eq:pnp-vae}, algorithm~\ref{algo:final} is a Plug-and-Play Half-Quadratic Splitting method~\cite{ryu2019plug} where the role of the denoiser is played by the reconstruction $\HVAE\left(\x_k, \tauvec\right)$.
We now derive from relation~\eqref{eq:pnp-vae} sufficient conditions to establish the convergence of the iterations. %
\subsection{Fixed-point convergence}\label{ssec:fp_conv}
Let us denote $\T$ the operator corresponding to one iteration of \eqref{eq:pnp-vae}:
   $ \T(\x) = \prox_{\gamma^2}f\left(\HVAE\left(\x, \tauvec\right)\right)$.
The Lipschitz constant of $\T$ can then be expressed as a function of $f$ and the HVAE reconstruction operator $\HVAE\left(\x_k, \tauvec\right)$.
\begin{proposition}[Proof in supplementary]
    \label{eq:prop_lipschitz}
   Assume that the decoder has a constant variance $\Vti{\z} = \frac{1}{\gamma^2}\id$ for all $\z$; and  the autoencoder with latent regularization is $L_{\tauvec}$-Lipschitz, {\em i.e.} $\forall \bf{u}$, $\bf{v} \in \R^n$: $
            || \HVAE\left(\bf{u}, \tauvec\right) - \HVAE\left(\bf{v}, \tauvec\right) || \leq L_{\tauvec} ||\bf{u} -\bf{v}||$.
   Then, denoting as $\lambda_{\min}$ the smallest eigenvalue of $A^tA$, we have
    \begin{equation}\label{eq:lips_const}
        ||\T({\bf u}) - \T({\bf v})|| \leq \frac{\sigma^2}{\gamma^2\lambda_{\min} + \sigma^2}L_{\tau}||{\bf u} -{\bf v}||.
    \end{equation}
\end{proposition}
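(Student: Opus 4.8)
The plan is to exploit the fact that, for the quadratic data-fidelity term $f(\x) = \frac{1}{2\sigma^2}\|A\x - \y\|^2$, the proximal operator $\prox_{\gamma^2 f}$ is an \emph{affine} map. Consequently the Lipschitz constant of $\T = \prox_{\gamma^2 f}\circ \HVAE(\cdot, \tauvec)$ factorizes as the product of the operator norm of the linear part of $\prox_{\gamma^2 f}$ and the assumed Lipschitz constant $L_{\tauvec}$ of the regularized autoencoder. No nonexpansiveness machinery is needed, precisely because the linearity of $\prox_{\gamma^2 f}$ makes its Lipschitz constant coincide exactly with the operator norm of its linear part.

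First I would compute $\prox_{\gamma^2 f}$ in closed form. By definition, $\prox_{\gamma^2 f}(\mathbf{v})$ minimizes $\gamma^2 f(\mathbf{u}) + \frac{1}{2}\|\mathbf{u} - \mathbf{v}\|^2$ over $\mathbf{u}$; setting the gradient to zero yields the normal equations $\left(\frac{\gamma^2}{\sigma^2}A^tA + \id\right)\mathbf{u} = \frac{\gamma^2}{\sigma^2}A^t\y + \mathbf{v}$, hence
\[
    \prox_{\gamma^2 f}(\mathbf{v}) = M\mathbf{v} + \mathbf{b}, \qquad M := \left(\frac{\gamma^2}{\sigma^2}A^tA + \id\right)^{-1},
\]
with $\mathbf{b}$ a constant vector independent of $\mathbf{v}$. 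Since the additive constant $\mathbf{b}$ cancels in differences, I would then write $\T(\mathbf{u}) - \T(\mathbf{v}) = M\left(\HVAE(\mathbf{u}, \tauvec) - \HVAE(\mathbf{v}, \tauvec)\right)$ and pass to norms to obtain $\|\T(\mathbf{u}) - \T(\mathbf{v})\| \leq \|M\|_{\mathrm{op}}\,\|\HVAE(\mathbf{u}, \tauvec) - \HVAE(\mathbf{v}, \tauvec)\|$.

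It then remains to bound $\|M\|_{\mathrm{op}}$ via the spectrum of $A^tA$. As $A^tA$ is symmetric positive semidefinite, its eigenvalues satisfy $\lambda_i \geq 0$, and $\frac{\gamma^2}{\sigma^2}A^tA + \id$ is diagonalized by the same eigenvectors with eigenvalues $\frac{\gamma^2}{\sigma^2}\lambda_i + 1 > 0$. Thus $M$ is symmetric positive definite and its largest eigenvalue is the reciprocal of the smallest eigenvalue of $\frac{\gamma^2}{\sigma^2}A^tA + \id$, giving $\|M\|_{\mathrm{op}} = \left(\frac{\gamma^2}{\sigma^2}\lambda_{\min} + 1\right)^{-1} = \frac{\sigma^2}{\gamma^2\lambda_{\min} + \sigma^2}$. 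Combining this with the $L_{\tauvec}$-Lipschitz bound on $\HVAE(\cdot, \tauvec)$ yields exactly \eqref{eq:lips_const}. The computation is routine; the only point requiring care is the bookkeeping that relates the \emph{smallest} eigenvalue $\lambda_{\min}$ of $A^tA$ to the \emph{largest} eigenvalue of the inverse $M$, and the observation that the constant $\mathbf{b}$ is irrelevant because $\T$ is evaluated on differences.
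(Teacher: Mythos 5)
Your proof is correct and takes essentially the same route as the paper's: the paper likewise writes $\T$ in closed form as an affine function of $\HVAE(\x,\tauvec)$ (via the explicit $\x$-update of Algorithm~\ref{algo:final}, which is exactly the prox you derive), bounds the difference by the operator norm of $\left(A^tA+\frac{\sigma^2}{\gamma^2}\id\right)^{-1}$ times $\frac{\sigma^2}{\gamma^2}L_{\tauvec}$, and evaluates that norm through the eigenvalues $\lambda+\frac{\sigma^2}{\gamma^2}$ of the shifted matrix. The only difference is cosmetic bookkeeping: you absorb the factor $\frac{\sigma^2}{\gamma^2}$ into your matrix $M$ before taking norms, while the paper keeps it as a separate scalar.
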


\begin{corollary}\label{cor:convergence}
If $\HVAE\left(\x_k, \tauvec\right)$  is $L_{\tauvec}<1$-Lipschitz, then iterations~\eqref{eq:pnp-vae} converge.
\end{corollary}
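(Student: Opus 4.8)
The plan is to combine the Lipschitz estimate of Proposition~\ref{eq:prop_lipschitz} with the Banach fixed-point theorem. The operator $\T$ acts on the complete metric space $\R^n$ equipped with the Euclidean norm, so in order to conclude convergence of the iterates $\x_{k+1}=\T(\x_k)$ it suffices to show that $\T$ is a contraction, \emph{i.e.} that its Lipschitz constant is strictly less than $1$. By Proposition~\ref{prop:pnp_vae}, one PnP-HVAE iteration of Algorithm~\ref{algo:final} is exactly one application of $\T$, so convergence of $(\x_k)$ reduces to the contraction property of $\T$.

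First I would observe that $A^tA$ is symmetric positive semidefinite, hence its smallest eigenvalue satisfies $\lambda_{\min}\geq 0$. Together with $\gamma^2>0$ and $\sigma^2>0$, this gives the elementary bound
\[
\frac{\sigma^2}{\gamma^2\lambda_{\min}+\sigma^2}\leq 1 .
\]
Inserting this into the estimate~\eqref{eq:lips_const} of Proposition~\ref{eq:prop_lipschitz} yields, for all ${\bf u},{\bf v}\in\R^n$,
\[
\|\T({\bf u})-\T({\bf v})\|\leq \frac{\sigma^2}{\gamma^2\lambda_{\min}+\sigma^2}\,L_{\tauvec}\,\|{\bf u}-{\bf v}\|\leq L_{\tauvec}\,\|{\bf u}-{\bf v}\| .
\]
Thus, whenever $L_{\tauvec}<1$, the operator $\T$ is a contraction with modulus $L_{\tauvec}<1$.

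Finally, since $\R^n$ is complete and $\T$ is a contraction, the Banach fixed-point theorem applies directly: $\T$ admits a unique fixed point $\x^\star$, and the sequence $(\x_k)$ produced by $\x_{k+1}=\T(\x_k)$ converges to $\x^\star$ at a geometric rate from any initialization $\x_0$. This is precisely the claimed convergence of the iterations~\eqref{eq:pnp-vae}.

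There is essentially no substantial obstacle here: the statement is an immediate corollary once Proposition~\ref{eq:prop_lipschitz} is established, and all the genuine work (bounding the Lipschitz constant of $\T$ through the proximal step and the autoencoder) lives in that proposition. The only point requiring a moment of care is justifying that the data-fidelity prefactor is a non-expansion, \emph{i.e.} $\lambda_{\min}\geq 0$, which follows from the positive semidefiniteness of $A^tA$; note that when $A$ has full column rank one even has $\lambda_{\min}>0$, so the prefactor is strictly below $1$ and the contraction margin is correspondingly larger.
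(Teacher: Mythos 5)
Your proof is correct and follows exactly the same route as the paper's: combine the Lipschitz bound of Proposition~\ref{eq:prop_lipschitz} with $L_{\tauvec}<1$ to conclude that $\T$ is a contraction, then invoke the Banach fixed-point theorem on $\R^n$. Your only addition is making explicit that the prefactor $\frac{\sigma^2}{\gamma^2\lambda_{\min}+\sigma^2}\leq 1$ because $A^tA$ is positive semidefinite, a step the paper leaves implicit (and indeed handles inside the proof of the proposition itself).
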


\begin{proof}
If  $L_{\tauvec} < 1$, then $\HVAE\left(\x_k, \tauvec\right)$ is a contraction. Hence $T$ is also a contraction form proposition~\ref{eq:prop_lipschitz} and consequently, Banach theorem ensures the convergence of the iteration $\x_{k+1} = T(\x_k)$  to a fixed point of $T$.
 \end{proof}

\begin{proposition}[Proof in supplementary]
    \label{prop:fixed_point}
    $\x^{\star}$ is a fixed point of $\T$ if and only if:
    \begin{equation}
        \label{eq:fixed_point}
        \nabla f(\x^{\star}) =  \frac{1}{\gamma^2}\left(\HVAE\left(\x^{\star}, \tauvec\right)-\x^{\star}\right)
    \end{equation}
\end{proposition}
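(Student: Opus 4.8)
The plan is to characterize the fixed point by substituting the explicit form of the proximal operator into the fixed-point equation $\x^{\star} = \T(\x^{\star})$ and simplifying. First I would recall that $f(\x) = \frac{1}{2\sigma^2}\|A\x - \y\|^2$ is convex and differentiable, so its proximal operator $\prox_{\gamma^2 f}$ is well-defined and single-valued. Writing out $\prox_{\gamma^2 f}(\u) = \arg\min_{\x} \gamma^2 f(\x) + \frac{1}{2}\|\x - \u\|^2$, the first-order optimality condition gives $\gamma^2 \nabla f(\x) + (\x - \u) = 0$, hence $\x = \prox_{\gamma^2 f}(\u)$ is equivalent to $\u = \x + \gamma^2 \nabla f(\x)$.

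Next I would apply this to the fixed-point condition. By definition $\x^{\star}$ is a fixed point of $\T$ precisely when $\x^{\star} = \prox_{\gamma^2 f}\!\left(\HVAE(\x^{\star}, \tauvec)\right)$. Using the equivalence just established with $\u = \HVAE(\x^{\star}, \tauvec)$ and $\x = \x^{\star}$, this holds if and only if
\begin{equation}
    \HVAE(\x^{\star}, \tauvec) = \x^{\star} + \gamma^2 \nabla f(\x^{\star}).
\end{equation}
Rearranging to isolate the gradient term yields
\begin{equation}
    \gamma^2 \nabla f(\x^{\star}) = \HVAE(\x^{\star}, \tauvec) - \x^{\star},
\end{equation}
and dividing by $\gamma^2$ gives exactly the claimed relation~\eqref{eq:fixed_point}. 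The argument is fully reversible, so the characterization is an ``if and only if'' as stated.

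I expect the proof to be short and mechanical rather than obstacle-laden, since it reduces to the standard optimality characterization of the proximal operator together with elementary algebra. The one point requiring a small amount of care is justifying that $\prox_{\gamma^2 f}$ is genuinely characterized by the stationarity equation $\u = \x + \gamma^2 \nabla f(\x)$: this relies on $f$ being convex (so the unconstrained minimizer is unique and equals the stationary point) and on $\x$ lying in a region where $\nabla f$ is the relevant subgradient, which is immediate here because $f$ is smooth everywhere as a quadratic. No Lipschitz or contraction hypothesis is needed for this statement, so Proposition~\ref{eq:prop_lipschitz} and Corollary~\ref{cor:convergence} play no role; the result holds for any fixed point that exists, independently of whether the iterations converge.
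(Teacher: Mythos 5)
Your proof is correct and follows essentially the same route as the paper's: writing the fixed-point condition $\x^{\star} = \prox_{\gamma^2 f}\left(\HVAE(\x^{\star}, \tauvec)\right)$, invoking the first-order optimality condition of the proximal minimization (valid since $f$ is convex and smooth), and rearranging. Your added remarks on the reversibility of each step and on the independence from the Lipschitz/contraction hypotheses are accurate but not needed beyond what the paper's argument already contains.
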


Proposition~\ref{prop:fixed_point} characterizes the solution of the latent-regularization scheme, in the case where the HVAE reconstruction is a contraction.
Under mild assumptions, the fixed point condition can be stated as a critical point condition
$$\nabla f(\x^*)+\nabla g(\x^*)=0,$$
of the objective function $f(\x) + g(\x) = - \log p(\y|\x) - \log \ptt{\x}$, where  the tempered prior is  the marginal
$ \ptt{\x} := \int \ptt{\x,\z} d\z $
of the joint tempered prior defined in~\eqref{eq:hvae_joint_model_temp}.
This result, detailed in the supplementary, follows from an interpretation of $\HVAE(\x,\tauvec)$ as an MMSE denoiser. As a consequence Tweedie's formula provides the link between the right-hand side of equation~\eqref{eq:fixed_point} and $\nabla g$.

\subsection{Numerical convergence with PatchVDVAE }\label{ssec:num_conv}
We illustrate the numerical convergence of Algorithm~\ref{algo:final}. We first analyse the Lipschitz constant of the HVAE reconstruction with the PatchVDVAE architecture proposed  in section~\ref{ssec:patchVDVAE}. Then we study the empirical convergence of the algorithm and show that it outperforms the baseline optimisation of the joint MAP~\eqref{eq:J1} with the Adam optimizer.

\noindent
{\bf   Lipschitz constant of the HVAE reconstruction.}
Corollary~\ref{cor:convergence} establishes the fixed point convergence of our proposed optimization algorithm under the hypothesis that the reconstruction with latent regularization is a contraction, \emph{i.e.} $L_{\tauvec} < 1$. 
We now show thanks to an empirical estimation of the Lipschitz constant $L_{\tauvec}$ that our PatchVDVAE network empirically satisfies such a property when applied to noisy images.  %
We present  in figure~\ref{fig:hist_lipschitz} the histograms of the ratios $r={||\HVAE(\bf{u}, \tauvec)-\HVAE(\bf{v}, \tauvec)||}/{||\bf{u}-\bf{v}||}$, where $\bf{u}$ and $\bf{v}$ are natural images extracted from the BSD dataset and corrupted with  white Gaussian noise. These ratios give a lower bound for the true Lipschitz constant $L_{\tauvec}$.
Although it is possible to set different temperature $\tau_l$ at each level, we fixed a constant temperature amongst all levels to limit the number of hyperparameters.
We realized tests for 3 temperatures $\tau \in \{0.6, 0.8, 0.99\}$, and 3 noise levels $\sigma \in \{0, 25, 50\}$.
On clean images ($\sigma=0$), the distribution of ratios in close to $1$. This suggests that the HVAE is well trained and accurately  models clean images. In some rare case, a ratio $r\geq 1$ is observed for clean images. This indicates that the reconstruction is not a contraction everywhere, in particular on the manifold of clean images. 

On noisy images $\sigma>0$, the reconstruction behaves as a contraction, as the ratio $r<1$ is always  observed. Moreover, reducing the temperature of the latent regularization $\tau$ increases the strength of the contraction. This suggests that with the trained PatchVDVAE architecture, the hypothesis $L_{\tauvec}<1$ in  Corollary~\ref{cor:convergence} holds for noisy images.

\begin{figure}
    \includegraphics[width=\columnwidth,trim={0 0cm 0 .4cm},clip]{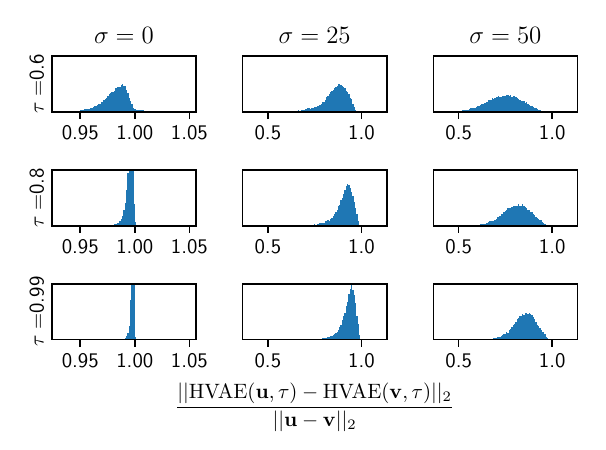}\vspace{-0.2cm}
    \caption{\label{fig:hist_lipschitz}Numerical estimation of the Lipschitz constant of PatchVDVAE reconstruction  with different temperatures $\tau$. We present the histogram of ratio values $\frac{||\HVAE(\bf{u}, \tau)-\HVAE(\bf{v}, \tau)||}{||\bf{u}-\bf{v}||}$, where $\bf{u}$ and $\bf{v}$ are  natural images corrupted with white Gaussian noise of different standard deviations $\sigma$. For noisy images ($\sigma>0)$, the observed Lipschitz constant is always less than $1$.\vspace{-0.2cm}}
\end{figure}

\noindent
{\bf   Convergence of Algorithm~\ref{algo:final}}
We now illustrate the effectiveness of PnP-HVAE through comparisons with the optimization of the objective $J_1(\x_k,\z_k)$ in~\eqref{eq:J1} using  the Adam algorithm~\cite{kingma2014adam} for two  learning rates $lr \in \{0.01, 0.001\}$.  The left plot in figure~\ref{fig:conv} shows that Adam is able to estimate a better minimum of $J_1$. However, our alternate algorithm requires a smaller number of iterations to converge. %

On the other hand, as illustrated by the right plot in figure~\ref{fig:conv}, the use of Adam involves numerical instabilities. Oscillations of the ratio $ L_k :=\frac{||\T\left(\x_{k+1}\right) - \T\left(\x_{k}\right)||}{||\x_{k+1} - \x_{k}||} %
$ are even increased  with larger learning rates, whereas our method provides a stable  sequence of iterates.

More important, we finally exhibit the better quality of the restorations obtained with our alternate algorithm  %
on inpaiting, deblurring and super-resolution of face images. In these experiments, we used the hierarchical VDVAE model~\cite{child2021very}  trained on the FFHQ dataset \cite{karras2019style}.
Figure~\ref{fig:face_restoration} (see $2$nd and $4$th columns) and table~\ref{table:comp_ILO} (PSNR, SSIM and LPIPS scores) illustrate that the quality of the images restored with our alternate optimization algorithm is higher than the ones obtained with Adam. This suggests that for image restoration purposes, our optimization method is able to find a more relevant local minimum of $J_1$ than Adam.

\begin{figure}[t]
    \label{fig:convergence}
    \centering
    \begin{tabular}{cc}
    \hspace{-.2cm}\includegraphics[height=.33\linewidth]{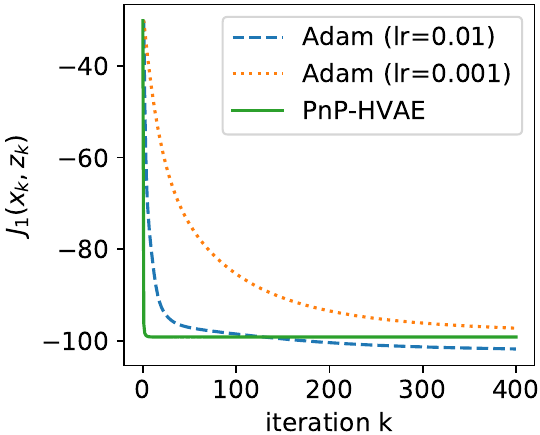}&\hspace{-.2cm}\includegraphics[height=.33\linewidth]{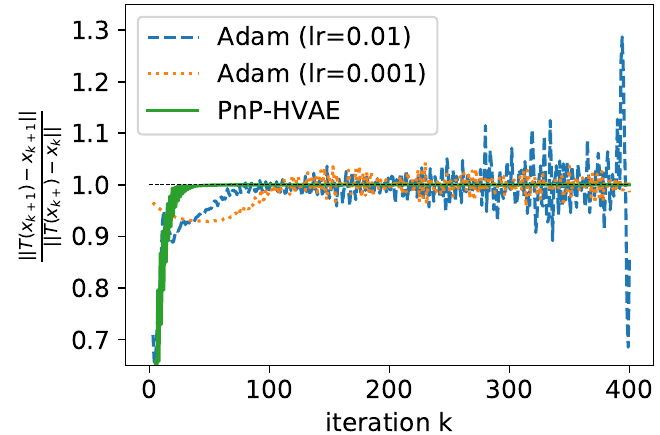}\vspace{-0.4cm}
    \end{tabular}
    \caption{\label{fig:conv} Comparison of the convergence of PnP-HVAE algorithm~\ref{algo:final} with respect to the baseline Adam optimizer, on a deblurring problem.
     Left (Convergence of the function value): PnP-HVAE converges faster to a minimum of the joint posterior  $J_1(\x_k,\z_k)$  in~\eqref{eq:J1}. 
   Right (Convergence of iterates $\x_k$): PnP-HVAE is more stable than Adam.\vspace{-0.1cm}}
\end{figure}

\begin{figure}[ht]
    \center
    \includegraphics[width=0.95\columnwidth]{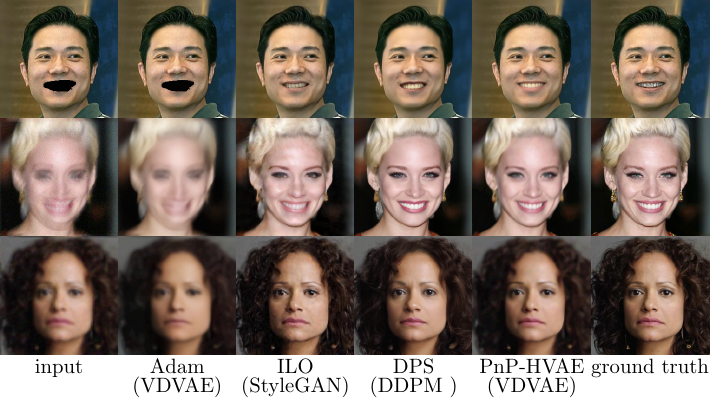}\vspace{-0.1cm}
    \caption{\label{fig:face_restoration}
        Visual comparaison of image restoration methods based on deep generative models. We studied $3$  tasks on face images:  inpainting (top), deblurring (middle), super-resolution (bottom). 
        Contrary to the optimization of the objective~\eqref{eq:J1} with Adam, our alternate algorithm generates realistic results, on par with ILO~\cite{daras2021intermediate}, while remaining consistant with the observation.\vspace{-0.3cm}
    }
\end{figure}

\newcolumntype{x}[1]{>{\centering\arraybackslash}p{#1}}
\begin{table}[ht]\small
    \centering
        \begin{tabular}{x{1.15cm}x{1.5cm}x{.65cm}x{.65cm}x{.65cm}x{1cm}}
           & &PSNR$\uparrow$ & SSIM$\uparrow$ & LPIPS$\downarrow$ & time (s) \\
        \hline    
        \multirow{2}{*}{SR $\times 4$}&Adam & $28.56$ & $0.75$ & $0.38$ & $\underline{26}$  \\
       \multirow{2}{*}{$\sigma=3$}& ILO & $\underline{28.80}$ & $\underline{0.78}$ & $\mathbf{0.17}$ & $34$\\
       & PnP-HVAE & $\mathbf{29.32}$ & $\mathbf{0.82}$ & $0.28$ & $\mathbf{15}$\\ 
       & DPS & $27.53$ & $0.76$ & $\underline{0.21}$ & $153$\\
        \hline 
        Deblurring &Adam & $26.69$ & $0.75$ & $0.27$ &$ \underline{12}$  \\
        (motion) &ILO & $\underline{29.01}$ & $\underline{0.80}$ & $\underline{0.20}$ & $34$ \\
        $\sigma=8$&PnP-HVAE & $\mathbf{30.40}$ & $\mathbf{0.84}$ & $\mathbf{0.16}$ & $\mathbf{10}$\\
         & DPS & $28.70$ & $\underline{0.80}$ & $0.23$& $142$\\
        \hline 
         Deblurring &Adam & $\underline{30.17}$ & $\underline{0.83}$ & $\underline{0.21}$& $\underline{12}$\\
         (Gaussian) &ILO & $29.12$ & $0.79$ & $\mathbf{0.17}$ & $34$\\
        $\sigma=8$&PnP-HVAE& $\mathbf{30.81}$ & $\mathbf{0.86}$ & $0.24$ & $\mathbf{10}$\\
        & DPS & $29.14$ & $0.81$ & $ $0.24$ $ & $142$\\
        \end{tabular}\vspace*{-0.1cm}
    \caption{\label{table:comp_ILO}Quantitative evaluation on face restoration. Best results in {\bf bold}, second best \underline{underlined}.\vspace{-0.2cm}}
\end{table}

\section{Image restoration results}\label{sec:expe}
We present in section~\ref{ssec:faces} an application of PnP-HVAE on face images, using a pretrained state-of-the-art hierarchical VAE. %
Next, we study the application of our framework to natural images. To that end, we introduce  in section~\ref{ssec:patchVDVAE}  a patch hierachical VAE architecture, that is able to model natural images of different resolutions. In section~\ref{ssec:app_nat}, we provide deblurring, super-resolution and inpainting experiments to demonstrate the relevance of the proposed method.

\subsection{Face Image restoration (FFHQ)}\label{ssec:faces}
We first demonstrate the effectiveness of PnP-HVAE on highly structured data, by performing face image restoration.
Latent variable generative models can accurately model structured images such as face images \cite{karras2019style,vahdat2020nvae,child2021very,kingma2018glow}, and then be used to produce high quality restoration of such data.
In our experiments, we use the VDVAE model of~\cite{child2021very}, pre-trained on the FFHQ dataset~\cite{karras2019style}, as our hierarchical VAE prior.
VDVAE has $L=66$ latent variable groups in its hierarchy and generates images at resolution $256\times256$.

We compare PnP-HVAE with two restoration methods based on different class of generative models, namely the intermediate layer optimization algorithm (ILO)~\cite{daras2021intermediate} and the diffusion posterior sampling method (DPS)~\cite{chung2022diffusion}. ILO is a GAN inversion method which optimizes the image latent code along with the intermediate layer representation of a StyleGAN2 generative network~\cite{karras2020analyzing} to generate an image consistent with a degraded observation.
DPS use denoising diffusion probabilistic model~\cite{song2020score,ho2020denoising} as a prior, and produce a sample from the posterior by conditioning each iteration of the sampling process on $\y$.
We use the official implementation of ILO, along with a StyleGAN2 model that was trained for 550k iterations on images of resolution $256\times256$ from FFHQ~\cite{stylegan2pytorch}. 
For DPS, we use the official implementation as well.
As VDVAE and StyleGAN models are not trained on the same train-test split of FFHQ, we chose to evaluate the methods on a subset of 100 images from the CelebA dataset~\cite{liu2018large}. %
For super-resolution, the degradation model corresponds to the application of a Gaussian low-pass filter followed by a $\times 4$ sub-sampling, and the addition of a Gaussian white noise with $\sigma=3$.
For the deblurring, we considered motion blur and  Gaussian kernels, both with a noise level $\sigma=8$. %

We provide quantitative comparisons in table~\ref{table:comp_ILO}, along with a visual comparison of the results in figure~\ref{fig:face_restoration}.
PnP-HVAE has the best  PSNR and SSIM results for all the considered restoration tasks, while ILO provides better results  for the perceptual distance.
By jointly optimizing the image and its latent variable, PnP-HVAE provides  results that are both realistic and consistent with the degraded observation.
On the other hand,  ILO  only optimizes on an extended latent space. This method generates  sharp and realistic images with better LPIPS scores,  
but the results lack  of consistency with respect to the observation, which explains the overall lower PSNR performance. 
DPS produces highly realistic samples, however DPS is limited by its long inference time, as it requires one network function evaluation and one backpropagation operation through the network at each of the 1000 sampling steps required to generate one image.

\subsection{PatchVDVAE: a HVAE for natural images}\label{ssec:patchVDVAE}
Available generative models in the literature operate on images of  fixed resolutions and %
are either restrained to datasets of limited diversity, or even to registered face images~\cite{kingma2018glow,child2021very, vahdat2020nvae, karras2019style}, or requiring additional class information~\cite{brock2018large, dhariwal2021diffusion, song2020score, luhman2022optimizing}.
Fitting an unconditional model on natural images appears to be a more difficult task, as their resolution can change, and their content is highly diverse.
The complexity of the problem can be reduced by learning a prior model on patches of reduced dimension. 
For image restoration problems, the patch model can be reused on images of higher dimensions~\cite{zoran2011learning,prost2021learning,altekruger2023patchnr}. When the model is a full CNN, the prior on the set of the  patches can  be computed efficiently by applying the network on the full image~\cite{prost2021learning}.

We thus introduce  patchVDVAE, a fully convolutional hierarchical VAE.
Contrary to existing HVAE models whose resolution is constrained by the constant tensor at the input of the top-down block, patchVDVAE can generate images of different resolutions by controlling the dimension of the input latent.
This amounts to defining a prior on patches whose dimension corresponds to the receptive field of the VAE. A similar model is used for image denoising in~\cite{prakash2021interpretable}.

For PatchVDVAE architecture, we use the same bottom-up and top-down blocks as VDVAE~\cite{child2021very}, and replace the constant trainable input in the first top-down block by a latent variable, to make the model fully convolutional (details on the  architecture are given in the supplementary). 
The training dataset is composed of $128\times 128$ patches extracted from a combination of DIV2K~\cite{agustsson2017ntire} and Flickr2K~\cite{Lim_2017_CVPR_workshops} datasets.
We perform data augmentation by extracting  patches at $3$ resolutions: HR-images and $\times 2$ and $\times 4$ downscaled images. 
The model is trained for $7.10^5$ iterations with a batch size of $64$. Following the recommendation of~\cite{hazami2022efficient}, we use Adamax optimizer with an exponential moving average and gradient smoothing of the variance.
We set the decoder model to be a Gaussian with diagonal covariance, as in~\cite{luhman2022optimizing}.
PatchVDVAE is fully convolutional and can generate images of dimension that are multiples of $64$ as illustrated by
figure~\ref{fig:vdvae}.

\newlength{\patchwidth}
\setlength{\patchwidth}{0.135\columnwidth}
\begin{figure}[!ht]
    \centering
    \begin{subfigure}[t]{.34\columnwidth}\hspace{0.1cm}
        \setlength{\tabcolsep}{0.02pt}
\renewcommand{\arraystretch}{0}
        \begin{tabular}{*{2}{p{1.03\patchwidth}}}%
            \includegraphics[width=\patchwidth]{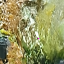} &
            \includegraphics[width=\patchwidth]{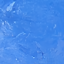} \\
            \includegraphics[width=\patchwidth]{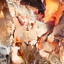} &
            \includegraphics[width=\patchwidth]{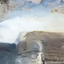} %
        \end{tabular}
    \end{subfigure}\hspace{-0.15cm}
    \begin{subfigure}[t]{.64\columnwidth}
\begin{tabular}{cc}\vspace{-0.1cm}
\includegraphics[width=2\patchwidth]{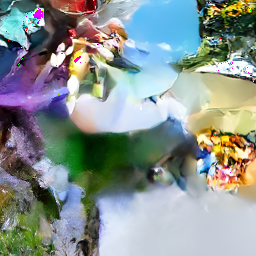}&
        \includegraphics[width=2\patchwidth]{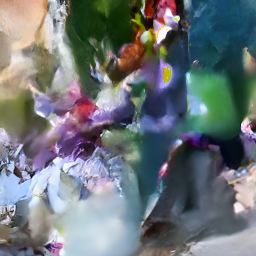}\end{tabular}
    \end{subfigure}
    \caption{\label{fig:vdvae} Left: $64\times64$ patches samples from our patchVDVAE model trained on patches from natural images.
    Right: PatchVDVAE is fully convolutional and it can generate images of higher resolution (here: $128\times128$).\vspace{-0.2cm}}
\end{figure}

\subsection{Natural images restoration}\label{ssec:app_nat}
We  evaluate PnP-HVAE on natural image restoration.
For each task, we report the average value of the PSNR, the SSIM, and the LPIPS metrics on $20$ images from the test set of the BSD dataset~\cite{MartinFTM01}.

\noindent
{\bf Image deblurring.}
In the experiments, we consider $2$ Gaussian kernels and $2$ motion blur kernels from~\cite{levin2009understanding}, with $3$ different noise levels 
$\sigma \in \{2.55, 7.65, 12.75\}$.
As a baseline we consider  EPLL~\cite{zoran2011learning}, which learns a prior on image patches with a Gaussian mixture model.
We also compare PnP-HVAE  with PnP-MMO and GS-PnP, $2$ competing convergent Plug-and-Play methods based on CNN denoisers.
PnP-MMO~\cite{pesquet2021learning} restricts the denoiser to be contraction in order to guarantee the convergence of the PnP forward-backard algorithm. GS-PnP~\cite{hurault2022gradient} considers a gradient step denoiser and reaches state-of-the-art performances of non converging methods~\cite{zhang2021plug}.
We set the temperature $\tau$  in our method as $0.95$, $0.8$ and $0.6$ for noise levels $2.55$, $7.65$ and $12.75$ respectively, and we let it run for a maximum of $50$ iterations. 
For the three compared methods we use the official implementations and pre-trained models provided by the respective authors. %
Details on the choice of hyperparameters for the concurrent methods are provided in the supplementary material.
Figure~\ref{fig:deblurring_bsd} illustrates that our method provides correct deblurring results. 
According to table~\ref{tab:deb}, the performance of PnP-HVAE is between those of EPLL and GS-PnP and it outperforms PnP-MMO for large noise levels.

\begin{table}
\begin{center}\footnotesize

    \begin{tabular}{>{\centering}m{.3cm}*{5}{c}}
    $\sigma$ &Method & PSNR$\uparrow$ & SSIM$\uparrow$ & LPIPS$\downarrow$  \\ 
    \hline
    \multirow{4}{*}{\vcell{$2.55$}}
    & PnP-HVAE & $27.75$ & $0.79$ & $0.31$\\
    & GS-PNP \cite{hurault2022gradient} & $\mathbf{29.59}$ & $\mathbf{0.84}$ & $\mathbf{0.22}$\\
    & EPLL \cite{zoran2011learning} & $26.49$ & $0.71$ & $0.36$\\ 
    & PnP-MMO \cite{pesquet2021learning} & $\underline{29.50}$ & $\underline{0.83}$ & $\underline{0.20}$ \\ \hline
    \multirow{4}{*}{\vcell{$7.65$}}
    & PnP-HVAE & $\underline{26.36}$ & $\underline{0.72}$ & $\underline{0.40}$\\
    & GS-PNP \cite{hurault2022gradient} & $\mathbf{27.33}$ & $\mathbf{0.77}$ & $\mathbf{0.31}$\\
    & EPLL \cite{zoran2011learning} & $24.04$ & $0.66$ & $0.45$ \\ 
    & PnP-MMO \cite{pesquet2021learning} & $25.34$ & $0.69$ & $0.34$\\
    \hline
    \multirow{4}{*}{\vcell{$12.75$}}
    & PnP-HVAE & $\underline{25.12}$ & $\mathbf{0.73}$ & $\underline{0.47}$\\
    & GS-PNP \cite{hurault2022gradient} & $\mathbf{26.32}$ & $\mathbf{0.73}$ & $\mathbf{0.37}$\\
    & EPLL \cite{zoran2011learning} & $23.28$ & $0.61$ & $0.51$ \\ 
    & PnP-MMO \cite{pesquet2021learning} & $22.42$ & $0.53$& $0.54$ \\
    \hline
    &\vspace*{-.3cm}\\
            \multicolumn{2}{c}{Blur and motion kernels}& \multicolumn{3}{c}{
        \includegraphics*[scale=1]{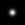}\;\includegraphics*[scale=1]{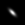}\;\includegraphics*[scale=1]{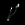}\;\includegraphics*[scale=1]{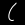}} 
    \end{tabular}
        \caption{\label{tab:deb}Comparison  of PnP-HVAE  and other restoration methods on deblurring. Results are averaged on $4$ kernels.\vspace{-0.2cm}}%
    \end{center}
\end{table}

\begin{figure}
    \begin{subfigure}[h]{\linewidth}
        \centering
        \includegraphics*[width=\columnwidth]{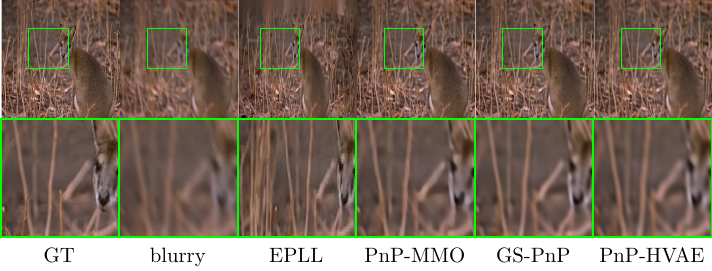}\vspace{-0.1cm}
        \caption{Gaussian blur, $\sigma=2.55$}
    \end{subfigure}
    \begin{subfigure}[h]{\linewidth}
        \centering
        \includegraphics*[width=\columnwidth]{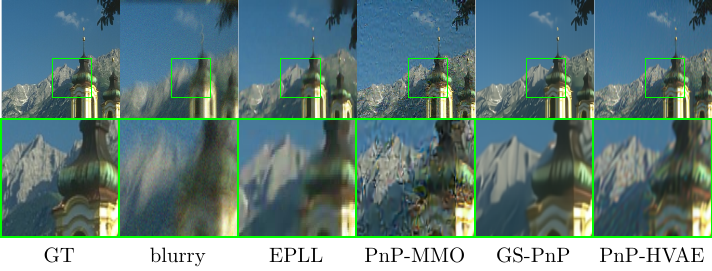}\vspace{-0.1cm}
        \caption{Motion blur, $\sigma=7.65$}
    \end{subfigure}\vspace*{-0.1cm}
    \caption{\label{fig:deblurring_bsd} Natural image deblurring\vspace{-0.1cm}}
\end{figure}

\noindent
{\bf Image inpainting.}
Next we consider the task of noisy image inpainting. 
We compose a test-set of 10 images from the validation set of BSD~\cite{MartinFTM01} and we create masks
  by occluding diverse objects of small size in the images. 
A Gaussian white noise with $\sigma=3$ is added to the images.
As a comparaison, we still consider GS-PnP and EPLL.
For PnP-HVAE, the temperature is set to $\tau=0.6$, and the algorithm is run for a maximum of $200$ iterations, unless the residual $||\x_{k+1}-\x_k||$ is on a plateau.
We provide on Table~\ref{tab:inpainting_bsd} the distortion metrics with the ground truth, as well as a visual
\begin{table}

\begin{center}
    \begin{tabular}{cccc}
        & PSNR$\uparrow$ & SSIM$\uparrow$ &LPIPS$\downarrow$ \\\hline
        PnP-HVAE  & $\mathbf{29.54}$ & $\mathbf{0.93}$ & $\mathbf{0.06}$\\
        GS-PNP & $28.52$ & $\mathbf{0.93}$ & $0.09$\\
        EPLL & $\underline{29.16}$ & $\mathbf{0.93}$ & $\mathbf{0.06}$\\
    \end{tabular}
    \caption{\label{tab:inpainting_bsd}Quantitative evaluation for inpainting on BSD.}
    \end{center}
\end{table}
comparison on figure~\ref{fig:inpainting_bsd}. 
With its hierarchical structure,  PnP-HVAE outperforms the compared methods. \vspace{0.05cm}
\begin{figure}[!h]
    \includegraphics[width=\columnwidth]{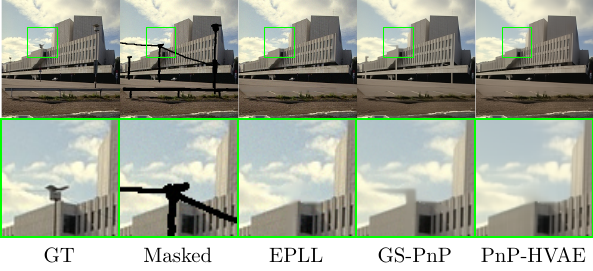}\vspace{-0.1cm}
    \caption{\label{fig:inpainting_bsd}Natural image inpainting\vspace{-0.3cm}}
\end{figure}

\noindent {\bf Effect of the temperature.}
PnP-HVAE gives control on the temperature of the prior over the latent space.
In figure~\ref{fig:temp_effect}, we illustrate that reducing the temperature increases the strength of the regularization prior. In this example the tuning $\tau=0.7$ produces the best performance.
\begin{figure}[!ht]
   
    \includegraphics[width=\columnwidth]{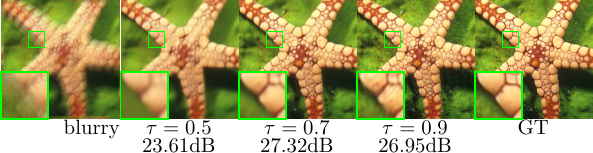}\vspace{-0.15cm}
    \caption{ \label{fig:temp_effect} Effect of the temperature in PnP-VAE on a deblurring problem, with $\sigma=7.65$.\vspace{-0.15cm}}
\end{figure}

\noindent {\bf Effect of the number of latent groups}
We study the effect of the number of latent groups $L$ on the hierarchical model on the restoration performance.
It has been observed that HVAEs outperform non-hierarchical VAEs in terms of likelihood score~\cite{sonderby2016ladder}, and that increasing the number of latent groups in the hierarchy improves the modelling performance of HVAE for a fixed number of parameters~\cite{child2021very}.
Therefore we can expect that the gain in modelling performance due to a higher $L$ will translate into a gain in restoration performance using our method.
We train different patchVDVAE models, with different number of latent groups $L$. In order to keep the number of trainable parameters constant, we replace stochastic top-down blocks with deterministic blocks in our network with the higher $L$ value ($L=36$). We evaluate the different models on image deblurring, using the same experimental settings as the one described in subsection~\ref{ssec:app_nat}.
The results in table~\ref{tab:ablation_L} show that increasing the number  of stochastic groups (L) has a positive  effect on the validation ELBO, up to $L=18$, and that a better ELBO correlates with a better restoration performance.

\begin{table}
    \centering
    \caption{Effect of the number $L$ of latent groups on the restoration performance, measured in PSNR (dB), for image deblurring. We observed similar trends for the LPIPS and SSIM metrics.
    \label{tab:ablation_L}}
      \begin{tabular}[h]{c|cccc} 
         & $L=6$ & $L=12$ & $L=18$ & $L=36$ \\\hline
      $\sigma=2.55$ & $27.25$ &$\mathbf{27.87}$ & $\underline{27.82}$ & $27.71$\\
      $\sigma=7.65$ & $26.10$ &$26.41$ & $\mathbf{26.74}$ & \underline{$26.51$}\\
      $\sigma=12.75$ & $24.78$ &$25.16$ & $\mathbf{25.57}$ & \underline{$25.27$}\\\hline
      ELBO$\uparrow$(val) & $-1.24$ &$-1.14$ & $\mathbf{-1.10}$ & $\mathbf{-1.10}$ \vspace{-0.35cm}
   \end{tabular}
\end{table}

\section{Conclusion}
We proposed PnP-HVAE, a method using hierarchical variational autoencoders as a prior to solve image inverse problems.
Motivated by an alternate optimization scheme, PnP-HVAE exploits the encoder of the HVAE to avoid backpropagating through the generative network. 
We derived sufficient conditions on the HVAE model to guarantee the convergence of the algorithm. We have verified empirically that PnP-HVAE satisfies those conditions.
By jointly optimizing over the image and the latent space, PnP-HVAE  produces realistic results that are more consistent with the observation than GAN inversion  on a specialized dataset.
PnP-HVAE can also restore natural images of any size  using our PatchVDVAE model trained on natural images patches.

\ificcvfinal
\section*{Acknowledgements}
This study has  been carried out with financial support from the French Research Agency through the PostProdLEAP project (ANR-19-CE23-0027-01).
Experiments presented in this paper were carried out using the PlaFRIM experimental testbed, supported by Inria, CNRS (LABRI and IMB), Université de Bordeaux, Bordeaux INP and Conseil Régional d’Aquitaine (see https://www.plafrim.fr)
\fi
\clearpage

{\small
\bibliographystyle{ieee_fullname}
\bibliography{biblio}
}

\clearpage

\onecolumn
\appendix
\section*{Summary}

This supplementary material contains:
\begin{itemize}
\item proofs of the theoretical results of the main paper in section~\ref{proofs}
\item additional implementation details in section~\ref{details} 
\item a discussion on the contractivity of the autoencoder and its fixed points in section~\ref{sec:contractivity}
\item additional  comparisons with the competing methods  in section~\ref{add}
\end{itemize}
\section{Proofs of the main results}\label{proofs}
In this section we provide proofs relative to Algorithm~\ref{algo:hierarchical_latent_reg}, Proposition~\ref{eq:prop_lipschitz}, Proposition~\ref{prop:fixed_point} and the characterization of the fixed point given by  Algorithm~\ref{algo:final}.

\subsection{Global minimum of the hierarchical Gaussian negative log-likelihood}\label{sec:algo1-global-min}

In this section we show that under certain conditions Algorithm~\ref{algo:hierarchical_latent_reg} actually computes the global minimum of $J_2(\x,\z)$ with respct to $\z$.
To reach that conclusion we first decompose the objective function into several terms (equation~\eqref{eq:J2decomp} in proposition~\ref{prop:J2decomp}). Since many of these terms do not depend on $\z$ we conclude that
$$ \arg\min_{\z} J_2(\x,\z) = \arg\min_{\z} A(\z) + B(\z).$$

Furthermore, since the second term ($B(\z)$) only depends on $\z$ via the determinant of the encoder and decoder covariances, we have that under assumption~\ref{hypo:vpcov}
$$ \arg\min_{\z} J_2(\x,\z) = \arg\min_{\z} A(\z).$$

Finally, proposition~\ref{prop:global-min} shows that a functional of the form $A(\z)$ reaches its global minimum exactly at the point $E_{\tauvec}(\x)$ computed by Algorithm~\ref{algo:hierarchical_latent_reg}. Hence under assumption~\ref{hypo:vpcov} we have that
$$ \arg\min_{\z} J_2(\x,\z) = \arg\min_{\z} A(\z) = E_{\tauvec}(\x).$$

\begin{proposition}\label{prop:J2decomp}
	The objective $J_2(\x,\z)$ in equation~\eqref{eq:J2} can be decomposed as
	\begin{align}\label{eq:J2decomp}
		J_2(\x,\z) = f(\x)-\log\pd{\x} + A(\z) + B(\z) + C
	\end{align}
	where
	\begin{align}
		A(\z) &:= \sum_{l=0}^{L-1} A_l(\zl,\z_{<l}) \label{eq:A}\\
		B(\z) &:= \sum_{l=0}^{L-1} B_l(\z_{<l}) \\
		C      &:=  \sum_{l=0}^{L-1} C_l 
	\end{align}
	and
	\begin{align}
		A_l(\zl,\zll) &:= \| \zl - m_l(\zll)\|^2_{S_l^{-1}(\zll)} \label{eq:Al} \\
		B_l(\zll)      &:= \frac12 \log \det(S_l^{-1}(\zll)) + \frac12 (1-\lambda_l)\log \det(S_{p,l}^{-1}(\zll)) \\
		C_l             &:= \frac{d_l}{2} (\log \lambda_l -\lambda_l\log(2\pi))
	\end{align}
    and
    \begin{align}
    	m_{p,l}(\zll) & := \mu_{\theta,l}(\zll)           &   S_{p,l}(\zll) &:= \Sigma^{-1}_{\theta,l}(\zll) \label{eq:mplSpl} \\
    	m_{q,l}(\zll) &:=  \mu_{\phi,l}(\zll)               &   S_{q,l}(\zll) &:= \Sigma^{-1}_{\phi,l}(\zll)  \label{eq:mqlSql} \\
    	m_l(\zll) &:= S_{q,l}(\zll) m_{q,l}(\zll) + \lambda_l S_{p,l}(\zll) m_{p,l}(\zll) &
    	S_l(\zll) &:= S_{q,l}(\zll) + \lambda_l S_{p,l}(\zll)  \label{eq:gaussianprod}  %
    \end{align}
\end{proposition}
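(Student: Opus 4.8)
The plan is to establish the identity~\eqref{eq:J2decomp} by plugging the Gaussian conditionals~\eqref{eq:gaussian_hvae} directly into the definition~\eqref{eq:J2} of $J_2$ and then reorganizing the per-level summands by a completion of squares in each $\z_l$. The term $f(\x)-\log\pd{\x}$ occurs identically in~\eqref{eq:J2} and~\eqref{eq:J2decomp} and contains no latent variable, so I would immediately set it aside and concentrate on showing that $\sum_{l=0}^{L-1}\bigl(\log\qp{\z_l|\x,\z_{<l}}+\lambda_l\log\pt{\z_l|\z_{<l}}\bigr)=-\bigl(A(\z)+B(\z)+C\bigr)$. The perfect-VAE relation~\eqref{eq:perfect_vae} is not needed at this stage: this is a purely algebraic rewriting of $J_2$.

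I would then work one level at a time. Writing the encoder and prior in the precision/mean notation~\eqref{eq:mplSpl}--\eqref{eq:mqlSql}, the negative log-density $-\log\qp{\z_l|\x,\z_{<l}}$ splits into a quadratic $\tfrac12\|\z_l-m_{q,l}\|^2_{S_{q,l}}$ and a normalization $\tfrac12\log\det(2\pi S_{q,l}^{-1})$, and similarly $-\lambda_l\log\pt{\z_l|\z_{<l}}$ yields $\tfrac{\lambda_l}{2}\|\z_l-m_{p,l}\|^2_{S_{p,l}}$ together with a $\lambda_l$-weighted normalization. The two quadratics in $\z_l$ are then merged: their sum is a single quadratic with combined precision $S_l=S_{q,l}+\lambda_l S_{p,l}$ whose minimizer is exactly the interpolated point $S_l^{-1}m_l$, with $m_l=S_{q,l}m_{q,l}+\lambda_l S_{p,l}m_{p,l}$ as in~\eqref{eq:gaussianprod}, i.e.\ precisely the point returned by Algorithm~\ref{algo:hierarchical_latent_reg}. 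Completing the square isolates this merged quadratic as $A_l(\z_l,\z_{<l})$ in~\eqref{eq:Al}.

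What remains I would sort purely by dependence. The determinant terms $-\tfrac12\log\det S_{q,l}-\tfrac{\lambda_l}{2}\log\det S_{p,l}$, together with the determinant factor generated by the Gaussian-product completion, depend on $\z_{<l}$ only through the covariances and assemble into $B_l(\z_{<l})$; the purely numerical pieces---the powers of $2\pi$ from the $d_l$-dimensional normalizations and the extra $\tfrac{d_l}{2}\log\lambda_l$ produced when the tempered factor $\pt{\z_l|\z_{<l}}^{\lambda_l}$ is rescaled to a normalized Gaussian of covariance $\Sigma_{\theta,l}/\lambda_l$---collect into the constant $C_l$. Summing over $l$ and reinstating $f(\x)-\log\pd{\x}$ reproduces~\eqref{eq:J2decomp}.

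The delicate step, and the main obstacle, is the bookkeeping of the constants and determinants created when the two Gaussian exponents are merged at each level. I would have to carry the $(2\pi)$ powers correctly through the $\lambda_l$-weighting of the prior, extract the $\tfrac{d_l}{2}\log\lambda_l$ normalization, and---most subtly---allocate the residual left by completing the square (the Gaussian-product constant, a quadratic in $m_{q,l}-m_{p,l}$) so that $A_l$, $B_l$ and $C_l$ carry exactly the dependences claimed in the statement. Checking that this allocation is consistent with the subsequent greedy global-minimization argument of Proposition~\ref{prop:global-min}---where only $A(\z)$ must retain $\z_l$-dependence and, under assumption~\ref{hypo:vpcov}, $B(\z)$ must become constant---is where the care lies; everything else is substitution and rearrangement.
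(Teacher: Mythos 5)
Your route is the paper's route: set aside $f(\x)-\log\pd{\x}$, rescale the tempered factor $\pt{\zl|\zll}^{\lambda_l}$ into a normalized Gaussian of covariance $\lambda_l^{-1}S_{p,l}^{-1}$ times a constant, merge the two per-level quadratics by the product-of-Gaussians identity, and sort the leftovers by their dependence on $(\zl,\zll)$ into $A_l$, $B_l$, $C_l$. So in approach you and the paper coincide. The problem is the step you yourself flag as delicate and then defer: the allocation of the Gaussian-product constant. That step cannot be completed as the statement demands, because the product of two Gaussian densities in $\zl$ is a Gaussian density only up to a factor that depends on the \emph{means}:
\begin{equation*}
	\N(\zl;m_{q,l},S_{q,l}^{-1})\,\N(\zl;m_{p,l},\lambda_l^{-1}S_{p,l}^{-1})
	= \N\!\left(\zl;\bar m_l,S_l^{-1}\right)\,
	\N\!\left(m_{q,l};\,m_{p,l},\,S_{q,l}^{-1}+\lambda_l^{-1}S_{p,l}^{-1}\right),
\end{equation*}
where $\bar m_l=S_l^{-1}(S_{q,l}m_{q,l}+\lambda_l S_{p,l}m_{p,l})$. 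Taking $-\log$ of the second factor produces, besides a log-determinant, the residual $\tfrac12\|m_{q,l}-m_{p,l}\|^2$ in the metric $(S_{q,l}^{-1}+\lambda_l^{-1}S_{p,l}^{-1})^{-1}$ — exactly the quadratic in $m_{q,l}-m_{p,l}$ you identified. This term depends on $\zll$ and $\x$ through the means, so it fits in none of the claimed buckets: not $A_l$ (which vanishes at $\zl$ equal to the interpolated point), not $B_l$ (determinants only), not $C_l$ (pure constants). A quick sanity check at $\lambda_l=1$ makes this stark: there the correction $D_l$ equals $1$, yet $\qp{\zl|\x,\zll}\pt{\zl|\zll}$ is certainly not the normalized density $\N(\zl;\bar m_l,S_l^{-1})$ unless $m_{q,l}=m_{p,l}$.

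What this means for your proposal: an honest completion of your own computation does not prove the decomposition with $B_l$ and $C_l$ exactly as stated; it proves a version with an additional mean-mismatch term $\tilde B_l(\zll,\x)$. Notably, the paper's own proof commits precisely the omission you were worried about — its displayed identity $\qp{\zl|\x,\zll}\pt{\zl|\zll}^{\lambda_l}=\N(\zl;m_l,S_l^{-1})D_l$, with $D_l$ built only from $d_l$, $\lambda_l$ and $\det S_{p,l}^{-1}$, silently drops the product constant (and carries sign slips in the exponents of $D_l$, as well as a missing $S_l^{-1}$ in the definition of $m_l$ relative to Algorithm~\ref{algo:hierarchical_latent_reg}). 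Your consistency check against Proposition~\ref{prop:global-min} is also the right instinct: the dropped residual is \emph{not} annihilated by Assumption~\ref{hypo:vpcov} (constant determinants of $\Sigma_{\phi,l}$ and $\Sigma_{\theta,l}$ control neither $\det S_l^{-1}=\det(S_{q,l}+\lambda_l S_{p,l})^{-1}$ nor the mean-dependent term), so the greedy recursion minimizes $A(\z)$ but not, in general, $A(\z)+\tilde B(\z)$. In short: your approach is the paper's approach, but the step you deferred is a genuine gap — and carrying it out carefully exposes a missing term in the stated decomposition rather than a missing idea on your part.
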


\begin{proof}
	First observe that $\qp{\zl|\x,\zll}$ and $\pt{\zl | \zll}$ are multivariate Gaussians as stated in equation~\eqref{eq:gaussian_hvae}. Also $\pt{\zl | \zll}^{\lambda_l}$ behaves like a Gaussian with a different normalization constant, namely
	$$  \pt{\zl | \zll}^{\lambda_l} = \N(\zl;m_{p,l},\lambda_l^{-1}S^{-1}_{p,l}) D_l  $$
	where the missing normalization constant is
	$$ D_l = (2\pi)^{-\frac{d_l}{2}(1-\lambda_l)}  \lambda_l^{-\frac{d_l}{2}} \det(S_{p,l}^{-1})^{-\frac{1}{2}(1-\lambda_l)}$$
	Now $  \qp{\zl | \x, \zll} \pt{\zl | \zll}^{\lambda_l} $ is the product of two Gaussians times the correcting term $D_l$. Since the product of two Gaussians is a Gaussian we obtain
	$$  \qp{\zl | \x, \zll} \pt{\zl | \zll}^{\lambda_l}  = \N(\zl; m_l, S_l^{-1}) D_l $$
	with mean and variance given by equation \eqref{eq:gaussianprod}.
	Taking  $-\log$ in the previous expression, we get $A_l + B_l + C_l$ by grouping into $A_l$ the  terms depending on both $\zl$ and $\zll$, in $B_l$ those depending only on $\zll$, and into $C_l$ the constant terms.
\end{proof}

\begin{assumption}[Volume-preserving covariances]\label{hypo:vpcov}
	The covariance matrices of the HVAE have constant determinant (not depending on $\zll$, although this constant may depend on the hierarchy level $l$)
	\begin{align}
		\det(\Sigma_{\phi,l}(\zll,\x)) &= c_l(\x)  \\
		\det(\Sigma_{\theta,l}(\zll)) &= d_l 
	\end{align}
\end{assumption}

\begin{proposition}[Algorithm~\ref{algo:hierarchical_latent_reg} computes the global minimum of $J_2(\x,\z)$ with respect to $\z$]\label{prop:global-min}
	Under Assumption~\ref{hypo:vpcov} minimizing $J_2(\x,\z)$ w.r.t. $\z$ is equivalent to minimizing  $A(\z)$ defined in equations~\eqref{eq:A}~and~\eqref{eq:Al}, \emph{i.e.}
	$$ \arg\min_{\z} J_2(\x,\z) = \arg\min_{\z} A(\z).$$
	In addition the global minimum of $A(\z_0, \dots, \z_{L-1})$ is given by the recursion computed by Algorithm~\ref{algo:hierarchical_latent_reg}, namely:
	    \begin{equation}
		\label{eq:hierarchical_solution}
		\begin{cases}
			\z_0^{\star} &=  m_0 \\
			\z_{l}^{\star} &= m_{l}(\z_{<l}^{\star}) \quad \text{for $l \in \{1, \dots, L-1\}$}
		\end{cases}
	\end{equation}
	where $\z_{<l}^{\star} = (\z_0^{\star}, \dots, \z_{l-1}^{\star})$, and $m_l(\zll)$ as defined in equations~\eqref{eq:mplSpl}~to~\eqref{eq:gaussianprod}. Put another way, $\z^{\star} = E_{\tauvec}(\x)$ as computed by Algorithm~\ref{algo:hierarchical_latent_reg}.
\end{proposition}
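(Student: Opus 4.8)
The plan is to prove the statement in two stages, mirroring the outline that precedes the proposition: first reduce $\min_{\z} J_2(\x,\z)$ to $\min_{\z} A(\z)$, and then exhibit the minimizer of $A$ explicitly and identify it with the output of Algorithm~\ref{algo:hierarchical_latent_reg}.

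For the first stage, I would start from the decomposition $J_2(\x,\z) = f(\x) - \log\pd{\x} + A(\z) + B(\z) + C$ furnished by Proposition~\ref{prop:J2decomp}. The terms $f(\x) - \log\pd{\x}$ and $C$ carry no dependence on $\z$, so they are irrelevant to the minimization and $\arg\min_{\z} J_2(\x,\z) = \arg\min_{\z}\left[A(\z)+B(\z)\right]$. The role of Assumption~\ref{hypo:vpcov} is that each summand $B_l(\z_{<l}) = \tfrac12\log\det(S_l^{-1}(\z_{<l})) + \tfrac12(1-\lambda_l)\log\det(S_{p,l}^{-1}(\z_{<l}))$ is built solely from determinants of the encoder and decoder covariances; since $\det(S_{p,l}^{-1}) = \det(\Sigma_{\theta,l})$ and the analogous encoder determinant are held constant in $\z_{<l}$, every $B_l$ becomes a constant, so $B(\z)$ drops out of the argmin and we are left with $\arg\min_{\z} J_2 = \arg\min_{\z} A(\z)$.

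For the second stage, the key structural observation is that $A(\z) = \sum_{l=0}^{L-1} A_l(\z_l,\z_{<l})$ with each $A_l(\z_l,\z_{<l}) = \|\z_l - m_l(\z_{<l})\|^2_{S_l^{-1}(\z_{<l})}$ a positive-definite quadratic form in $\z_l$, because $S_l = S_{q,l} + \lambda_l S_{p,l}$ is a sum of precision matrices and hence positive definite for $\lambda_l \geq 0$. Therefore $A_l \geq 0$, with equality if and only if $\z_l = m_l(\z_{<l})$, and $A(\z) \geq 0$. The decisive point is the causal, lower-triangular dependence: $A_l$ depends on $\z$ only through $\z_0,\dots,\z_l$. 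I would then argue by induction on $l$ that the recursion of Algorithm~\ref{algo:hierarchical_latent_reg} drives every summand to zero simultaneously. Setting $\z_0^{\star} = m_0$ (well defined since $\z_{<0}$ is empty) makes $A_0 = 0$; given $\z_{<l}^{\star}$ already fixed, setting $\z_l^{\star} = m_l(\z_{<l}^{\star})$ makes $A_l = 0$ without altering any $A_{l'}$ with $l' < l$, since those do not depend on $\z_l$. After the full sweep all $A_l$ vanish, so $A(\z^{\star}) = 0$, which is the global minimum of the nonnegative function $A$; positive definiteness of each $S_l$ furthermore forces the minimizer at each level to be unique, hence $\z^{\star}$ is the unique global minimizer. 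Identifying $\z^{\star}$ with the output $E_{\tauvec}(\x)$ of Algorithm~\ref{algo:hierarchical_latent_reg} then closes the argument.

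The step I expect to be the main obstacle is the first stage, specifically justifying that $B(\z)$ is genuinely constant. While $\det(S_{p,l}^{-1}) = \det(\Sigma_{\theta,l})$ is directly controlled by Assumption~\ref{hypo:vpcov}, the term $\det(S_l^{-1})$ involves the sum $S_{q,l} + \lambda_l S_{p,l}$, whose determinant is not in general a function of $\det(\Sigma_{\phi,l})$ and $\det(\Sigma_{\theta,l})$ alone; one needs extra structure on the covariances (for instance isotropic, or simultaneously diagonalizable, precisions, for which constancy of the individual determinants does propagate to the sum) for the claim to hold exactly. By contrast, the second stage, the exact greedy minimization of $A$, is clean: it rests only on nonnegativity of the quadratic summands and on the lower-triangular dependence across the hierarchy, which is precisely what allows a single forward sweep to attain the global optimum.
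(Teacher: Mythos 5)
Your proposal follows the paper's proof essentially step for step: the same reduction $\arg\min_{\z} J_2(\x,\z) = \arg\min_{\z}\left[A(\z)+B(\z)\right] = \arg\min_{\z} A(\z)$ via Proposition~\ref{prop:J2decomp} and Assumption~\ref{hypo:vpcov}, then the same nonnegativity-plus-causality argument for $A$; your forward induction for uniqueness is the same argument as the paper's, which picks the first index $k$ with $\z_k \neq m_k(\z_{<k}^{\star})$ and concludes $A(\z) \geq \|\z_k - m_k(\z_{<k}^{\star})\|^2_{S_k^{-1}(\z_{<k}^{\star})} > 0$.

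The obstacle you flag in stage one is real, and it is a gap in the paper's own proof, which simply asserts that $B(\z)$ ``only depends on $\z$ via the determinant of the encoder and decoder covariances.'' That assertion is false in general: $\det(S_l^{-1})$ with $S_l = \Sigma_{\phi,l}^{-1} + \lambda_l \Sigma_{\theta,l}^{-1}$ is not a function of $\det\Sigma_{\phi,l}$ and $\det\Sigma_{\theta,l}$ alone. Take, e.g., $\Sigma_{\phi,l}^{-1}$ equal to $\operatorname{diag}(1,4)$ at one point of the latent space and $\operatorname{diag}(2,2)$ at another, with $\Sigma_{\theta,l} = I$: the individual determinants are unchanged, yet $\det S_l$ is $(1+\lambda_l)(4+\lambda_l)$ versus $(2+\lambda_l)^2$. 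Note also that your suggested repair via simultaneously diagonalizable precisions does not suffice (the counterexample is simultaneously diagonal); one needs constant spectra, e.g.\ isotropic covariances with fixed variance, or covariances that are constant in $\zll$, not merely constant determinants. Moreover, the difficulty runs one level deeper than the determinant: the decomposition you import from Proposition~\ref{prop:J2decomp} is itself missing a $\zll$-dependent term. The product of two Gaussian densities is an \emph{unnormalized} Gaussian, so $\qp{\zl|\x,\zll}\,\pt{\zl|\zll}^{\lambda_l}$ equals $\N(\zl; m_l, S_l^{-1})$ times a factor whose negative logarithm contains the encoder--prior mean mismatch $\tfrac{1}{2}\,\bigl(m_{q,l}-m_{p,l}\bigr)^{T}\bigl(\Sigma_{\phi,l}+\lambda_l^{-1}\Sigma_{\theta,l}\bigr)^{-1}\bigl(m_{q,l}-m_{p,l}\bigr)$, which depends on $\zll$ through the network means $m_{q,l}(\zll,\x)$ and $m_{p,l}(\zll)$ and cannot be made constant by any assumption on the covariances. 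With that term restored in $B_l$, the forward sweep of Algorithm~\ref{algo:hierarchical_latent_reg} still zeroes every quadratic $A_l$, but it no longer minimizes $A+B$: accepting a small quadratic cost at level $l$ can buy a large mismatch reduction at level $l+1$ and beat the greedy point. So your stage two is indeed clean, but under the corrected decomposition the argument only establishes $E_{\tauvec}(\x) = \arg\min_{\z} A(\z)$; the full claim $\arg\min_{\z} J_2(\x,\z) = E_{\tauvec}(\x)$ needs hypotheses beyond Assumption~\ref{hypo:vpcov}. Your instinct that stage one is the weak point was correct, and if anything understated.
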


\begin{proof}
According to the decomposition of $J_2$ into several terms (equation~\eqref{eq:J2decomp} in proposition~\ref{prop:J2decomp}), we observe that many of these terms do not depend on $\z$. Therefore we conclude that
$$ \arg\min_{\z} J_2(\x,\z) = \arg\min_{\z} A(\z) + B(\z).$$

Furthermore, since the second term ($B(\z)$) only depends on $\z$ via the determinant of the encoder and decoder covariances, and these determinants do not depend on $\z$ under assumption~\ref{hypo:vpcov}, we conclude the first part of the proposition, namely
$$ \arg\min_{\z} J_2(\x,\z) = \arg\min_{\z} A(\z).$$

Now let's find the global minimum of $A(\z)$.

It is clear that $A(\z_0, \dots, \z_{L-1}) \geq 0$ for all $\z_0, \dots, \z_{L-1}$. 
It is also simple to verify that:
\begin{align} \nonumber
   & A(\z_1^{\star}, \dots, \z_{L-1}^{\star})\\ =&  ||m_0 - m_0||^2_{S_1^{-1}} + \sum_{l=1}^{L-1} ||m_{l}(\z_{<j}^{\star}) - m_l(\z_{<j}^{\star})||^2_{S_l^{-1}(\z_{<l}^{\star})} \nonumber\\
    =&0.
\end{align}
Therefore the minimum value of $A$ is reached in $\z^{\star}$.
Furthermore, for any $\z \neq \z^{\star}$, let us denote by $k$ the first value in $\{0, \dots, L-1\}$ such that $\z_k \neq m_k(\z_{<k}^{\star})$. 
Then, 
\begin{align}
    A(\z_0, \dots, \z_{L-1}) \geq ||\z_k - m_k(\z_{<k}^{\star})||^2_{S_k^{-1}(\z_{<k}^{\star})} > 0,
\end{align}
which implies that $\z^{\star}$ is the unique minimum of $A$.
\end{proof}

\paragraph{Discussion on assumption 1 (volume preserving covariance)}
We showed in proposition 5 that, under assumption 1, Algorithm~\ref{algo:hierarchical_latent_reg} computes the global minimum of $J_2(\x,\z)$ with respect to $\z$.
 When optimizing $z_{l}$ in~\eqref{eq:J2decomp} we only consider the impact of $\zl$ on the distance to the Gaussian mean in $A(\z)$, while ignoring its impact on the covariance volumes in the subsequent levels in the terms $B_{l'}(\z_{<l'})$, for $l' > l$. 
 If the covariance volumes are constant as stated in assumption 1, the value of $\zl$ has no impact on the covariance volumes of the subsequent levels, and algorithm~\ref{algo:hierarchical_latent_reg} gives the global minimizer of $J_2(\x, .)$ with respect to $\z$.
In practice, the HVAE we use does not enforce the covariance matrices of $p(\zl|\z_{<l})$ and $q(\zl|\z_{<l}, \x)$ to have constant volume.
However, the experiment in figure~\ref{fig:volume-preserving-covariance} shows that the variation of $B_{l+1}(\z_{<l+1})$ is negligible in front of $A_{l}(\zl)$.
Hence, we can reasonably expect algorithm~\ref{algo:hierarchical_latent_reg} to yield the minimum of $J_2(\x,\z)$ with respect to $\z$.
For future works, we could explicitly enforce assumption 1 in the HVAE design.

\begin{figure}
    \centering
    \includegraphics*[width=0.7\columnwidth]{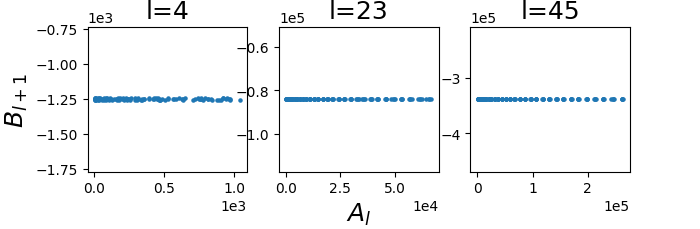}\vspace{-0.3cm}
    \caption{\small%
    Evolution of {$B_{l+1}=\log \det S_{l+1}^{-1}(z_{<l+1})$} as a function of the distance ${A_{l}} = \|z_l - \mu_l(z_{<l})||^2_{{S_l^{-1}}(z_{<l})}$. (experiment made on VDVAE).\vspace{-0.5cm}%
    }\label{fig:volume-preserving-covariance}
 \end{figure}

\subsection{Proof of Proposition~\ref{eq:prop_lipschitz} (Lipschitz constant of one iteration)}\label{app:prop_lipschitz}

\begin{proof}
    For a decoder with constant covariance $\Vti{\z} = \frac{1}{\gamma^2}\id$, we have:
    \begin{equation}
        \T(\x) = \left(\hspace{-2pt}A^tA+\frac{\sigma^2}{\gamma^2}\id\right)^{\hspace{-2pt}-1}\hspace{-3pt}
        \left(\hspace{-2pt}
            A^t\y+\frac{\sigma^2}{\gamma^2}\mut{\enc{\x}}
        \hspace{-2pt}\right)
    \end{equation}
    and then :
    \begin{equation}
        ||T({\bf u}) - T({\bf v})|| \leq \left|\left|\left(A^tA+\frac{\sigma^2}{\gamma^2}\id\right)^{-1}\right|\right|\frac{\sigma^2L_{\tau}}{\gamma^2}||{\bf u} -{\bf v}|| .
    \end{equation}
    To conclude the proof, we use that for an invertible matrix~M, $||M^{-1}||=\frac{1}{\bf{\sigma}_{min}(M)}$,  where $\bf{\sigma}_{min}(M)$ is the smallest eigenvalue of $M$.
  We also use the fact that $\alpha$ is an eigenvalue of $A^tA+\frac{\sigma^2}{\gamma^2}\id$ if and only if $\alpha=\lambda +\frac{\sigma^2}{\gamma^2}$ for an eigenvalue $\lambda\geq 0$ of the positive definite matrix $A^tA$.
\end{proof}

\subsection{Proof of Proposition~\ref{prop:fixed_point} (fixed point of PnP-HVAE)}\label{app:fixed_point}

\begin{proof}
	$\x^*$ is a fixed point of $T$ if and only if $\x^*=T(\x^*)$.
	Recalling the definition of $ \T(\x) := \prox_{\gamma^2}f\left(\HVAE\left(\x, \tauvec\right)\right)$, and the definition of proximal operator $\prox_{\gamma^2 f}(\x) = \arg\min_{\bm{t}} \gamma^2f(\bm{u}) + \frac{1}{2}||\x - \bm{t}||^2$, the fixed point condition is equivalent to
	$$ \x^* = \arg\min_{\bm{t}} \frac12 \|\bm{t}-\HVAE\left(\x^*, \tauvec\right)\|^2 + \gamma^2 f(\bm{t}).$$
	Since $f$ is convex the above condition is equivalent to
	$$ \x^*-\HVAE\left(\x^*, \tauvec\right) + \gamma^2 \nabla f(\x^*) = 0.$$
	Rearranging the terms we obtain equation~\eqref{eq:fixed_point}.
\end{proof}

Under mild assumptions the above result can be restated as follows:
$\x^*$ is a fixed point of $T$ if and only if 
$$\nabla f(\x^*)+\nabla g(\x^*)=0,$$
\emph{i.e.} whenever $\x^*$ is a \emph{critical point} of the objective function $f(\x) + g(\x) = - \log p(\y|\x) - \log \ptt{\x}$, where the tempered prior is defined as the marginal
$$ \ptt{\x} = \int \ptt{\x,\z} d\z $$
of the joint tempered prior defined in equation~\eqref{eq:hvae_joint_model_temp}.

This is shown in the next section.

\subsection{Fixed points are critical points}\label{app:fixed_points_are_critical_points}

In this section we characterize fixed points of Algorithm~\ref{algo:final} as critical points of a posterior density (a necessary condition to be a MAP estimator), under mild conditions. Before we formulate this caracterization we need to review in more detail a few facts about HVAE training, temperature scaling and our optimization model.

\paragraph{HVAE training.}
In section 3.1 we introduced how VAEs in general (and HVAEs in particular) are trained. As a consequence an HVAE embeds a joint prior
\begin{equation}  \label{eq:joint_prior_decoder}
	\pt{\x,\z} := \pt{\x|\z}\pt{\z} 
\end{equation}
from which we can define a marginal prior on $\x$
\begin{equation}  \label{eq:x-prior}
	\pt{\x} := \int \pt{\x,\z} d\z. 
\end{equation}

In addition, from the ELBO maximization condition in \eqref{eq:perfect_vae} and Bayes theorem we can obtain an alternative expression for the joint prior, namely
\begin{equation} \label{eq:joint_prior_encoder}
	\pt{\x,\z}  = \qp{\z|\x} \pd{\x}. 
\end{equation}

\paragraph{Temperature scaling.}
After training we reduce the temperature by a factor $\tauvec$, which amounts to replacing $\pt{\z}$ by 
$$ \ptt{\z} := \prod_{l=0}^{L-1} \frac{\pt{\zl|\z_{<l}}^{\frac{1}{\tau_l^2}}}{Z_l} $$
as shown in equation~\eqref{eq:hvae_joint_model_temp}, leading to the joint tempered prior
\begin{equation}\label{eq:joint_tprior_decoder}
	\ptt{\x,\z} :=  \pt{\x|\z}\ptt{\z}.
\end{equation}
The corresponding marginal tempered prior on $\x$ becomes
\begin{equation}  \label{eq:x-tprior}
	\ptt{\x} := \int \ptt{\x,\z} d\z
\end{equation}
and the corresponding posterior is
\begin{equation}  \label{eq:tposterior-decoder}
	\ptt{\z|\x} := \ptt{\x,\z} / \ptt{\x}.
\end{equation}

The joint tempered prior also has an alternative expression (based on the encoder). Indeed substituting $\pt{\x|\z}$ from equations~\eqref{eq:joint_prior_decoder}~and~\eqref{eq:joint_prior_encoder} into \eqref{eq:joint_tprior_decoder} we obtain
\begin{equation}\label{eq:joint_tprior_encoder}
	\ptt{\x,\z} = \frac{\ptt{\z}}{\pt{\z}} \qp{\z|\x}\pd{\x}.
\end{equation}
Substituting this result into definition~\eqref{eq:tposterior-decoder} we obtain an alternative expression for the tempered posterior
\begin{equation}
	\ptt{\z|\x} = \qp{\z|\x}\pd{\x}/\pt{\z}.
\end{equation}

\paragraph{Optimization model.}
Since we are using a scaled prior $\ptt{\x}$ encoded in our HVAE to regularize the inverse problem, the ideal optimization objective we would like to minimize is
\begin{equation}\label{eq:xMAP}
	U(\x) := \underbrace{-\log p(\y|\x)}_{f(\x)} \underbrace{- \log \ptt{\x}}_{g(\x)}.
\end{equation}

Since $\ptt{\x}$ is intractable our algorithm seeks to minimize a relaxed objective (see equation~\eqref{eq:J1}). Nevertheless, under certain conditions (to be specified below) this is equivalent to minimizing the ideal objective \eqref{eq:xMAP}.

\paragraph{Fixed-point characterization.}
We start by characterizing $\nabla \log \ptt{\x}$ in terms of an HVAE-related denoiser (Proposition~\ref{prop:tweedie}). Then we relate this denoiser to the quantity $\HVAE(\x,\tau)$ that is computed by our algorithm (Proposition~\ref{prop:denoiser-caracterization}). As a consequence we obtain that the fixed point condition in Proposition~\ref{prop:fixed_point} can be written as $\nabla U(\x)=0$ (see Corollary~\ref{cor:critical-point}).

\begin{proposition}[Tweedie's formula for HVAEs.]\label{prop:tweedie}
For an HVAE with Gaussian decoder $\pt{\x|\z} = \N(\x;\mut{\z},\gamma^2 I)$, the following denoiser based on the HVAE with tempered prior
\begin{equation}\label{eq:HVAE-denoiser}
	\Dtt{\x} := \int \mut{\z} \ptt{\z|\x} d\z
\end{equation}
satisfies Tweeedie's formula
\begin{equation}\label{eq:tweedie}
	\Dtt{\x} - \x = \gamma^2 \nabla \log \ptt{x} = - \gamma^2 \nabla g(\x).
\end{equation}
\end{proposition}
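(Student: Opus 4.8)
The plan is to recognize this as the classical Tweedie (Miyasawa) identity specialized to the tempered marginal prior $\ptt{\x}$, whose conditional is the Gaussian decoder. The starting point is the marginalization $\ptt{\x} = \int \pt{\x|\z}\ptt{\z}\,d\z$ obtained from~\eqref{eq:joint_tprior_decoder} and~\eqref{eq:x-tprior}, together with the explicit form $\pt{\x|\z} = \N(\x;\mut{\z},\gamma^2 I)$. The whole argument is essentially algebraic manipulation of Gaussian derivatives, with one analytic step (differentiation under the integral) to watch.

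First I would differentiate under the integral sign in $\x$. Using the elementary identity $\nabla_\x \N(\x;\mut{\z},\gamma^2 I) = -\gamma^{-2}(\x-\mut{\z})\,\N(\x;\mut{\z},\gamma^2 I)$, I get $\nabla_\x \ptt{\x} = -\gamma^{-2}\int (\x-\mut{\z})\,\pt{\x|\z}\ptt{\z}\,d\z$. Dividing by $\ptt{\x}$ and recognizing the tempered posterior $\ptt{\z|\x} = \pt{\x|\z}\ptt{\z}/\ptt{\x}$ from~\eqref{eq:tposterior-decoder} then yields $\nabla_\x\log\ptt{\x} = -\gamma^{-2}\int (\x-\mut{\z})\,\ptt{\z|\x}\,d\z$.

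Next I would split this integral using $\int \ptt{\z|\x}\,d\z = 1$ together with the definition $\Dtt{\x} = \int \mut{\z}\,\ptt{\z|\x}\,d\z$ from~\eqref{eq:HVAE-denoiser}. This collapses the right-hand side to $\gamma^{-2}\bigl(\Dtt{\x}-\x\bigr)$, and rearranging gives $\Dtt{\x}-\x = \gamma^2\nabla\log\ptt{\x}$. The final equality then follows immediately from the definition $g(\x) = -\log\ptt{\x}$ in~\eqref{eq:xMAP}, which turns $\gamma^2\nabla\log\ptt{\x}$ into $-\gamma^2\nabla g(\x)$.

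The only nontrivial point is justifying the exchange of gradient and integral, which I expect to be the main (purely technical) obstacle; everything else is rearrangement. This exchange is valid under the mild regularity invoked in the statement: if $\mut{\z}$ has at most polynomial growth and $\ptt{\z}$ has finite moments, then the integrand and its $\x$-gradient admit an integrable dominating envelope, uniformly on a compact neighbourhood of $\x$, so the dominated convergence theorem applies. I would simply note these conditions rather than dwell on them, since they hold for the Gaussian-decoder architecture considered here.
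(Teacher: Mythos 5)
Your proof is correct and follows essentially the same route as the paper's: differentiate $\ptt{\x}=\int\pt{\x|\z}\ptt{\z}\,d\z$ under the integral, apply the Gaussian gradient identity $\nabla_\x\pt{\x|\z}=-\gamma^{-2}(\x-\mut{\z})\pt{\x|\z}$, recognize the tempered posterior $\ptt{\z|\x}$, and use its normalization to collapse the integral to $\gamma^{-2}(\Dtt{\x}-\x)$. Your added remark justifying the interchange of gradient and integral is a minor refinement the paper leaves implicit, not a different argument.
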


\begin{proof}
From the definition of $\ptt{\x}$ in equation~\eqref{eq:x-tprior} we have that
$$ \nabla\log\ptt{\x} = \frac{1}{\ptt{\x}}\int  \nabla_{\x} \pt{\x|\z} \ptt{\z} d\z.$$
From the pdf of the Gaussian decoder $\pt{\x|\z}$ its gradient writes
$$ \nabla_{\x} \pt{\x|\z}  = - \frac{1}{\gamma^2} (\x-\mut{\z})\pt{\x|\z}.$$
Replacing this in the previous equation we get
\begin{align*}
	\nabla\log\ptt{\x} & = \frac{1}{\gamma^2} \int (\mut{\z}-\x) \frac{\pt{\x|\z}\ptt{\z}}{\ptt{\x}}d\z \\
	& =  \frac{1}{\gamma^2} \int (\mut{\z}-\x) \ptt{\z|\x}d\z \\
	& =  \frac{1}{\gamma^2} \left( \int \mut{\z}\ptt{\z|\x}d\z - \x\right).
\end{align*}
In the second step we used the definitions of the joint tempered prior $\ptt{\x,\z}$~\eqref{eq:joint_tprior_decoder} and the tempered posterior $\ptt{\z|\x}$~\eqref{eq:tposterior-decoder}.
The last step follows from the fact that $\int \ptt{\z|\x} d\z = 1$ according to definitions~\eqref{eq:tposterior-decoder}~and~\eqref{eq:x-tprior}.
Finally applying the definition of the denoiser $\Dtt{\x}$ in the last expression we obtain Tweedie's formula~\eqref{eq:tweedie}.
\end{proof}

Under suitable assumptions the denoiser defined above coincides with $\HVAE(\x,\tauvec)$ computed by our algorithm.

\begin{assumption}[Deterministic encoder]\label{hyp:deterministic-encoder}
	The covariance matrices of the encoder defined in equation~\eqref{eq:gaussian_hvae} are 0, \emph{i.e.} $\Sigma_{\phi,l}(\z_{<l},\x)=0$ for $l=0,\dots,L-1$.
	Put another way $\qp{\z|\x}=\delta_{E_{\tauvec}(\x)}(\z)$ is a Dirac centered at $E_{\tauvec}(\x)$.
\end{assumption}

\begin{proposition}\label{prop:denoiser-caracterization}
	Under Assumption~\ref{hyp:deterministic-encoder} the function $\HVAE(\x,\tauvec)$ computed by Algorithm~\ref{algo:final} coincides with the denoiser $\Dtt{\x}$ defined in equation~\eqref{eq:HVAE-denoiser}.
\end{proposition}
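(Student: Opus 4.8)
The plan is to show that under Assumption~\ref{hyp:deterministic-encoder}, the integral defining the denoiser $\Dtt{\x}$ collapses to a single point evaluation at $\z = E_{\tauvec}(\x)$. The key observation is that the denoiser $\Dtt{\x} = \int \mut{\z} \ptt{\z|\x} d\z$ is an expectation of $\mut{\z}$ against the tempered posterior, so it suffices to show that the tempered posterior $\ptt{\z|\x}$ degenerates to a Dirac mass centered at the output of Algorithm~\ref{algo:hierarchical_latent_reg}.

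First I would recall the encoder-based expression for the tempered posterior established earlier in this section, namely $\ptt{\z|\x} = \qp{\z|\x}\pd{\x}/\pt{\z}$, together with the alternative joint form~\eqref{eq:joint_tprior_encoder}. Under Assumption~\ref{hyp:deterministic-encoder} the encoder is a Dirac, $\qp{\z|\x} = \delta_{E_{\tauvec}(\x)}(\z)$, so I would substitute this into the posterior and argue that the resulting measure is also a point mass at $E_{\tauvec}(\x)$. A cleaner route, avoiding the division by $\pt{\z}$, is to observe that the limit $\Sigma_{\phi,l} \to 0$ should be taken consistently in Algorithm~\ref{algo:hierarchical_latent_reg}: when the encoder covariances vanish, the precision $S_q = \Sigma_{\phi,l}^{-1}$ diverges, the weighting $(S_q + \lambda_l S_p)^{-1}(S_q m_q + \lambda_l S_p m_p)$ converges to $m_q = \mupl{\z_{<l},\x}$, and so the point where the posterior concentrates is exactly the recursion computed by the algorithm. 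This identifies the support of the degenerate posterior with $E_{\tauvec}(\x)$.

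Once the posterior is a Dirac at $E_{\tauvec}(\x)$, the expectation in~\eqref{eq:HVAE-denoiser} evaluates directly:
\begin{equation}
	\Dtt{\x} = \int \mut{\z}\, \delta_{E_{\tauvec}(\x)}(\z)\, d\z = \mut{E_{\tauvec}(\x)} = \HVAE(\x,\tauvec),
\end{equation}
where the final equality is just the definition $\HVAE(\x,\tauvec) := \mut{\enc{\x}}$ from Proposition~\ref{prop:pnp_vae}. This completes the identification.

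The main obstacle I anticipate is making the degeneracy argument rigorous: taking $\Sigma_{\phi,l}\to 0$ in a \emph{hierarchical} model requires care, since each $\z_l$ is conditioned on the previously sampled $\z_{<l}$, and one must verify that the limit commutes with the recursive structure of the prior/encoder conditionals. The clean way to handle this is to proceed level by level, using the fact (from Proposition~\ref{prop:global-min}) that the product $\qp{\zl|\x,\zll}\,\pt{\zl|\zll}^{\lambda_l}$ is itself Gaussian with mean $m_l(\zll)$; as the encoder precision dominates, this Gaussian concentrates at $m_q = \mupl{\zll,\x}$ conditionally on $\zll$, and an induction on $l$ then propagates the concentration through the whole hierarchy, yielding the joint Dirac at $E_{\tauvec}(\x)$. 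Care must also be taken that the normalizing factor $1/\pt{\z}$ in the encoder expression of the posterior does not shift the location of the mass — which holds because it is evaluated at the single point where the Dirac is supported.
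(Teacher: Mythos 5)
Your proposal is correct and takes essentially the same route as the paper: both arguments show that under Assumption~\ref{hyp:deterministic-encoder} the tempered posterior $\ptt{\z|\x} = \qp{\z|\x}\pd{\x}/\pt{\z}$ collapses to a Dirac at $E_{\tauvec}(\x)$ (with the normalization $\int \ptt{\z|\x}\,d\z = 1$ forcing the constant $\pd{\x}/\pt{E_{\tauvec}(\x)}$ to equal $1$, exactly your closing remark), after which the integral in~\eqref{eq:HVAE-denoiser} evaluates to $\mut{E_{\tauvec}(\x)} = \HVAE(\x,\tauvec)$. Your extra level-by-level limit argument ($\Sigma_{\phi,l}\to 0$ making the interpolation in Algorithm~\ref{algo:hierarchical_latent_reg} converge to $m_q$) is a rigorous justification of the identification $\qp{\z|\x}=\delta_{E_{\tauvec}(\x)}(\z)$, which the paper simply builds into the statement of Assumption~\ref{hyp:deterministic-encoder} rather than proving.
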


\begin{proof}
	$\HVAE(\x,\tauvec)$ is defined in Proposition~\ref{prop:fixed_point} as
	$$ \HVAE(\x,\tauvec) = \mut{E_\tau(\x)}.$$
	
	First observe that for a deterministic encoder we also have $\ptt{\z|\x} = \delta_{E_{\tauvec}(\x)}(\z)$.
	Indeed for any test function $h$:
	\begin{align*} 
		\int h(\z) \ptt{\z|\x} d\z & =  \int h(\z)\qp{\z|\x} \pd{\x} /\pt{\z} d\z \\
		& = h(E_{\tauvec}(\x))  \underbrace{\pd{\x} /\pt{E_{\tauvec}(\x)}}_{Z(\x)}.
	\end{align*}
	And the normalization constant $Z(\x)$ should be equal to 1 because $\int \ptt{\z|\x} d\z=Z(\x)=1$.
	Hence $\ptt{\z|\x} = \qp{\z|\x} = \delta_{E_{\tauvec}(\x)}(\z)$.
	
	Finally applying the definition of $\Dtt{\x}$ we obtain
	\begin{align*} 
		\Dtt{\x} & = \int \mut{\z} \ptt{\z|\x} d\z = \mut{E_{\tauvec}(\x)} \\
		& = \HVAE(\x,\tau).
	\end{align*}
\end{proof}

Combining Propositions~\ref{prop:denoiser-caracterization},~\ref{prop:fixed_point}~and~\ref{prop:tweedie} we obtain a new characterization of fixed points as critical points.

\begin{corollary}\label{cor:critical-point}
	Under Assumption~\ref{hyp:deterministic-encoder} $\x^*$ is a fixed point of $T$ if and only if
	\begin{equation}
		\nabla f(\x^*) + \nabla g(\x^*) = 0
	\end{equation}
   where $g(\x) = - \log \ptt{\x}$.
\end{corollary}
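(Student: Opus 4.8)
The plan is to obtain the corollary by simply chaining together the three preceding results, Propositions~\ref{prop:fixed_point},~\ref{prop:denoiser-caracterization}~and~\ref{prop:tweedie}. Each of these is already either an exact equivalence or an exact identity, so the desired biconditional should follow by pure substitution, with no new estimate or limiting argument required. The only role played by Assumption~\ref{hyp:deterministic-encoder} is to license the use of Proposition~\ref{prop:denoiser-caracterization}, which is the single nontrivial link in the chain.

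Concretely, I would start from the fixed-point characterization of Proposition~\ref{prop:fixed_point}, which states that $\x^*$ is a fixed point of $T$ if and only if
\[ \nabla f(\x^*) = \frac{1}{\gamma^2}\left(\HVAE(\x^*,\tauvec) - \x^*\right). \]
Under Assumption~\ref{hyp:deterministic-encoder}, Proposition~\ref{prop:denoiser-caracterization} identifies $\HVAE(\x^*,\tauvec) = \Dtt{\x^*}$, so the right-hand side becomes $\frac{1}{\gamma^2}\left(\Dtt{\x^*} - \x^*\right)$. Finally, Tweedie's formula for HVAEs (Proposition~\ref{prop:tweedie}) gives $\Dtt{\x^*} - \x^* = -\gamma^2 \nabla g(\x^*)$; substituting this cancels the factor $\gamma^2$ and yields $\nabla f(\x^*) = -\nabla g(\x^*)$, that is $\nabla f(\x^*) + \nabla g(\x^*) = 0$. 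Since the first link is an equivalence and the remaining two are identities valid at every point, the final statement is again an equivalence, as claimed.

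I do not expect any genuine obstacle in the corollary itself, as all the substantive work is carried by the three propositions it invokes. The one point deserving explicit care is the application of Proposition~\ref{prop:denoiser-caracterization}: the deterministic-encoder hypothesis is precisely what collapses the tempered posterior $\ptt{\z|\x}$ to the Dirac mass $\delta_{E_{\tauvec}(\x)}$, so that the integral defining $\Dtt{\x}$ reduces to $\mut{E_{\tauvec}(\x)} = \HVAE(\x,\tauvec)$. One should also note that Tweedie's formula is used under the Gaussian-decoder assumption~\eqref{eq:decoder_constant} already in force throughout this section. With those two hypotheses in place, the proof is a one-line substitution.
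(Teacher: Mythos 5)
Your proposal is correct and follows essentially the same route as the paper's own proof, which likewise obtains the corollary by pure substitution among Propositions~\ref{prop:tweedie},~\ref{prop:denoiser-caracterization}~and~\ref{prop:fixed_point} (the paper merely chains them in the opposite order, starting from Tweedie's formula and ending with the fixed-point characterization). Your remarks on the role of Assumption~\ref{hyp:deterministic-encoder} and the standing Gaussian-decoder hypothesis match the paper's usage exactly.
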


\begin{proof}
	From Proposition~\ref{prop:tweedie} we have that
	$$- \nabla g(\x) = \frac{1}{\gamma^2} \left(\Dtt{\x} - \x\right).$$
	From Proposition~\ref{prop:denoiser-caracterization} we have that (under Assumption~\ref{hyp:deterministic-encoder}) $\Dtt{\x} = \HVAE(\x,\tau)$. In combination with the previous result:
	$$- \nabla g(\x) = \frac{1}{\gamma^2} \left(\HVAE(\x,\tau) - \x\right).$$
	Finally, Proposition~\ref{prop:fixed_point} allows to conclude that
	$$ - \nabla g(\x) = \nabla f(\x).$$
\end{proof}

\section{Details on PatchVDVAE architecture}\label{details}
In this section, we provide additional details about the architecture of PatchVDVAE. Then, we present the choice of the hyperparameters used for the concurrent methods (presented  in section~\ref{sec:expe} of the main paper) .
\subsection{PatchVDVAE}
Figure~\ref{fig:patchvdvae_arch} provides a detailed overview of the structure of a PatchVDVAE network.
The architecture follows VDVAE model~\cite{child2021very},
except for the first top-down block, in which we replace the constant input by a latent variable sampled from a Gaussian distribution.
 The architecture presented in figure~\ref{fig:patchvdvae_arch}  illustrates the structure of HVAE networks, but the number of blocks is different to the PatchVDVAE network used in our experiments.
Our PatchVDVAE top-down path is composed of $L=30$ top-down blocks of increasing resolution.
The image features are upsampled using an unpooling layer every $5$ blocks.
The first unpooling layer performs a $\times 4$ upsampling, and the following unpooling layers perform $\times 2$ upsampling.
The dimension of the filters is $256$ in all blocks. 
In order to save computations in the residual blocks, the $3\times3$ convolutions are applied on features of reduced channel dimension (divided by $4$). 
$1\times 1$ convolutions are applied before and after the $3\times3$ convolutions to respectively reduce and increase the number of channels.
The latent variables $\zl$ are tensors of shape $12 \times H_l \times W_l$, where the resolution $H_l$, $W_l$ corresponds to the resolution of the corresponding top-down-block.
The bottom-up network structure is symmetric to the top-down network, with $5$ residual blocks for each scale, and pooling layers between each scale.

\begin{figure}[!ht]
    \begin{subfigure}[b]{0.4\linewidth}
        \includegraphics[width=\linewidth]{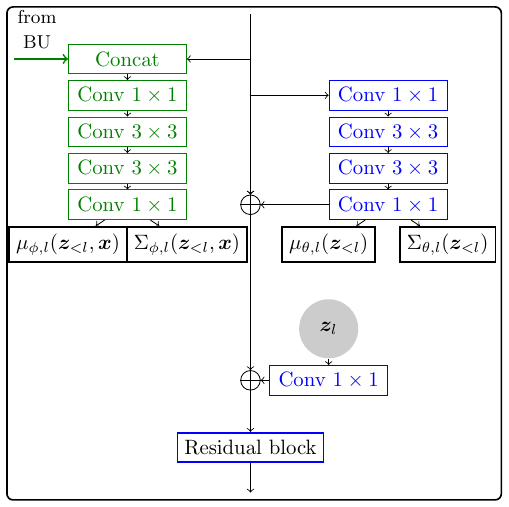}
        \subcaption*{Top-down block}
    \end{subfigure}
    \begin{subfigure}[b]{0.5\linewidth}
        \includegraphics[width=0.8\linewidth]{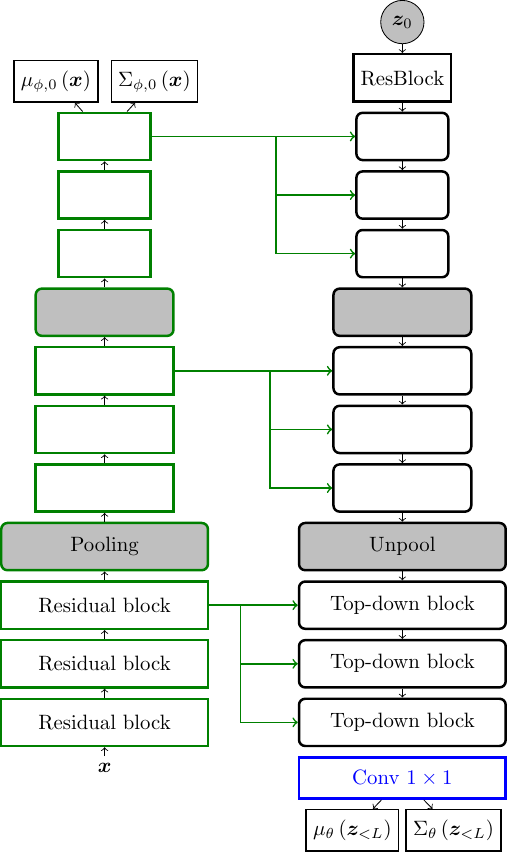}
        \subcaption*{Full autoencoder architecture}
    \end{subfigure}
    \begin{subfigure}[b]{0.4\linewidth}
        \centering
        \includegraphics[scale=1]{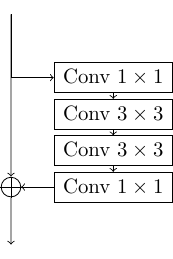}
        \subcaption*{Residual block}
    \end{subfigure}

    \caption{\label{fig:patchvdvae_arch}Structure of the PatchVDVAE architecture. For clarity, we omit the non-linearity after each convolution.}
\end{figure}

\subsection{Hyperparameters of compared methods}
\paragraph{Face image restoration.}
For ILO, we found that optimizing the first 5 layers of the generative network offered the best trade-off between image quality and consistency with the observation.
Hence, we optimize the 5 first layers for 100 iterations each. 
This choice is different from the official implementation, where they only optimize the 4 first layers for a lower number of iterations, trading restoration performance for speed.
For DPS, we set the scale hyper-parameter $\zeta'$ (described in subsection C.2 in\cite{chung2022diffusion})  to $\zeta'=1$ for the deblurring and super-resolution experiments reported in this paper.

\paragraph{Natural images restoration - Deblurring.}
For the three tested methods, we use the official implementation provided by the authors, along with the pretrained models.
For EPLL, we use the default parameters in the official implementation.

For GS-PnP, using the notation of the paper, we use the suggested hyperparameter $\lambda_\nu=0.1$ for the motion blur kernels %
and $\lambda_\nu=0.75$ for the Gaussian kernels.

For PnP-MMO, we use the denoiser trained on $\sigma_{den}=0.007$. 
On deblurring with $\sigma=2.55$ we use the default parameters in the implementation. 
for higher noise levels ($\sigma=7.65$; $\sigma=12.75$), and we  set the strength of the gradient step as $\gamma=\sigma_{den}/(2\sigma||h||)$, where $h$ corresponds to the blur kernel.

\paragraph{Natural images restoration- Inpainting.}
For EPLL, we use the default parameters provided in the authors matlab code.
For GS-PnP, after a grid-search, we chose to set $\lambda_\nu=1$ and $\sigma_{denoiser}=10$.

\section{Discussion on the conctractivity of HVAE}\label{sec:contractivity}

We showed in section~\ref{sec:convergence} that PnP-HVAE converges to a fixed point under the assumption that $\x \rightarrow \HVAE(\x, \tau)$ is contractive. If this condition is met, the sequence of $u_k$ defined by $u_{k+1} = HVAE(u_k, \tau)$ should converge to a fixed point.
Figure ~\ref{fig:fixed} presents the \textbf{evolution of a fixed point iteration} $u_{k+1} = HVAE(u_k, \tau)$. 
 The image is smoothened over the iterations, and finally converges to a piececewise constant image. We used patchVDVAE for this experiment.

\begin{figure}[h]
    \centering
   \includegraphics*[scale=1]{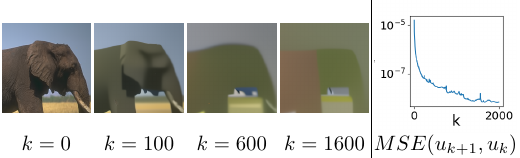}\vspace{-8pt}
   \caption{
      Fixed-point iterations of  patchVDVAE for $\tau=0.99$.\label{fig:fixed}} 
\end{figure}

\section{Comparisons}\label{add}
In this section, we provide additional visual results on face images and natural images.
\subsection{Additional results on face image restoration}
We provide additional comparisons with the GAN-based ILO method on inpainting (figure~\ref{fig:inp_add}), $\times 4$ super-resolution (figure~\ref{fig:sr_add}) and deblurring (figure~\ref{fig:deb_add}). PnP-HVAE provides equally or more plausible glasses in the first column) inpaiting than ILO. For superresolution, ILO produces sharper but not realistics faces. This is an agreement with the scores presented in table~\ref{table:comp_ILO}). For deblurring, ILO  creates textures on faces that looks realistic (low LPIPS) but are less consistent with the observation (significantly lower PSNR and SSIM).
\begin{figure}
    \includegraphics[width=\linewidth]{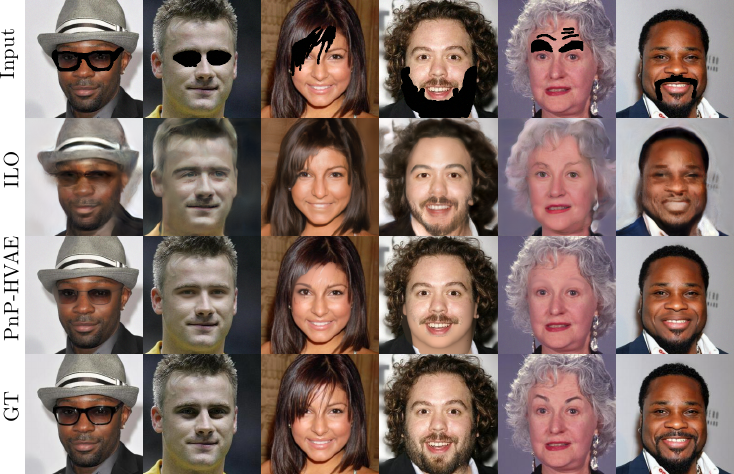}
    \caption{\label{fig:inp_add}Inpainting}
\end{figure}

\begin{figure}
    \includegraphics[width=\linewidth]{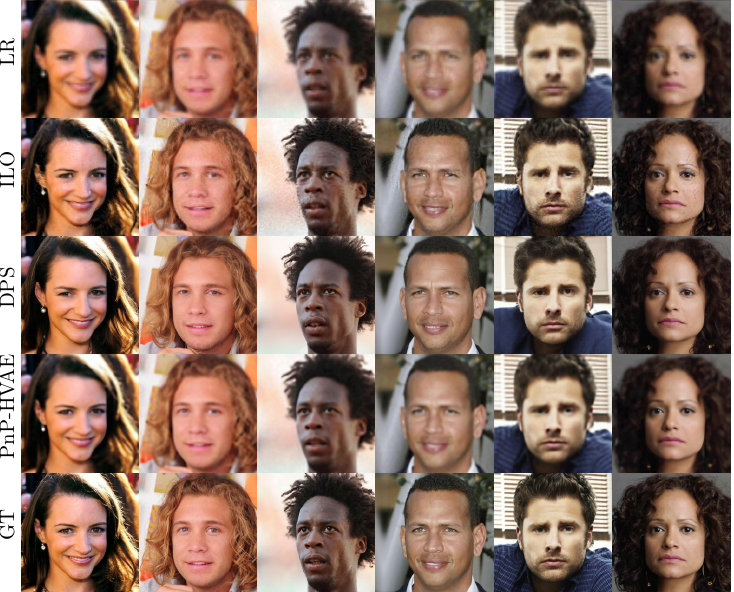}
    \caption{\label{fig:sr_add}$\times 4$ super-resolution, with kernel (a) from Figure~\ref{fig:kernels} and $\sigma=3$}
\end{figure}

\begin{figure}
    \includegraphics[width=\linewidth]{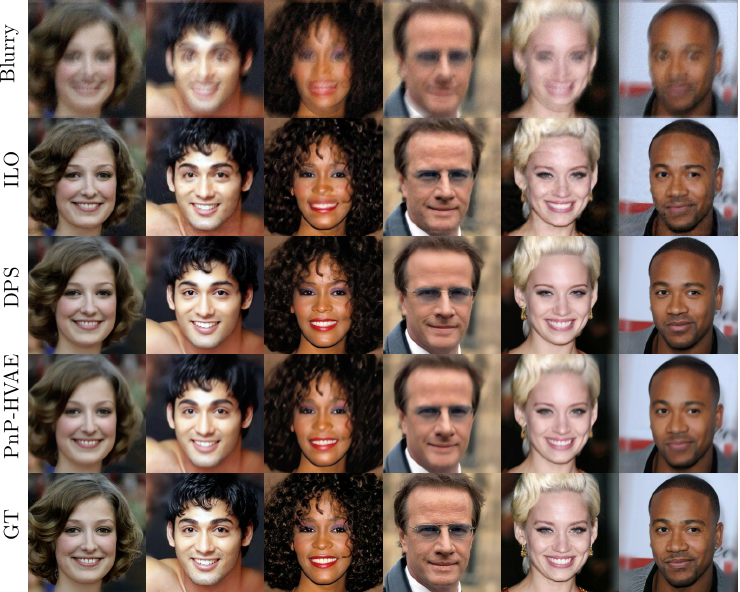}
    \caption{\label{fig:deb_add}Deblurring, with kernel (d) from Figure~\ref{fig:kernels} and $\sigma=8$}
\end{figure}

\subsection{Additional results on natural images restoration}
We finally present additional results on natural images restoration.
All the PnP-HVAE images presented below were produced using our PatchVDVAE model.
We also provide visual comparisons with concurrent PnP methods and EPLL. For deblurring (figures~\ref{fig:deb1} and~\ref{fig:deb2}, PnP methods perform better than EPLL. Following quantitative results of figure~\ref{tab:deb}, for larger noise level, PnP-HVAE outperforms PnP-MMO and provides restoration close to GS-PnP.

For inpainting (figure~\ref{fig:demo_inp}), the hierarchical structure of PatchVDVAE leads to more plausible reconstructions,  and PnP-HVAE outperforms the compared methods. \vspace{0.05cm}

\begin{figure}
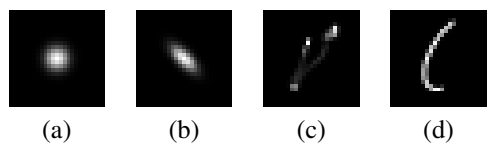

    \centering
    \begin{tabular}[]{c c c c}
        \includegraphics[scale=2]{figures/kernels/4.png} &
        \includegraphics[scale=2]{figures/kernels/7.png} &
        \includegraphics[scale=2]{figures/kernels/9.png} &
        \includegraphics[scale=2]{figures/kernels/11.png} \\
        (a) & (b) & (c) & (d) 
    \end{tabular}
    \caption{\label{fig:kernels}Kernels used for deblurring experiments, from~\cite{levin2009understanding}}
\end{figure}

\newlength{\fwidth}
\setlength{\fwidth}{0.9\linewidth}
\begin{figure}
    \begin{subfigure}[h]{\linewidth}
        \centering
    
        \includegraphics[width=\fwidth]{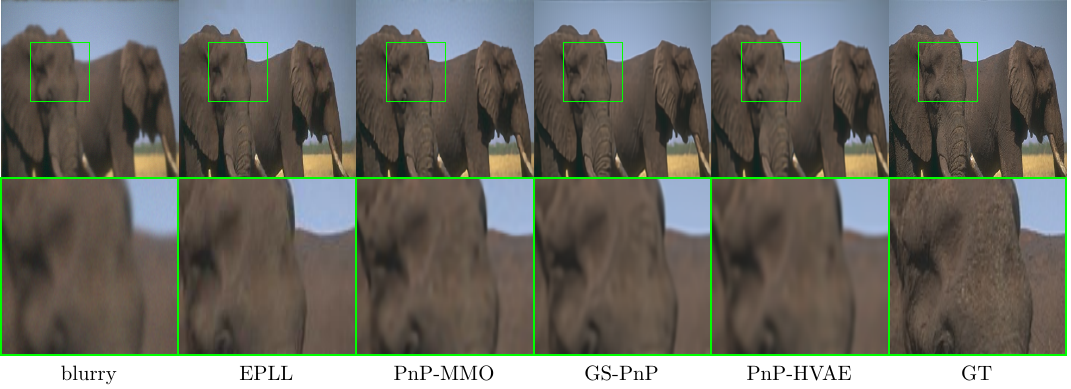}
        \caption{kernel (a), $\sigma=2.55$}
    \end{subfigure}
    
    \begin{subfigure}[h]{\linewidth}
    \centering
    \includegraphics[width=\fwidth]{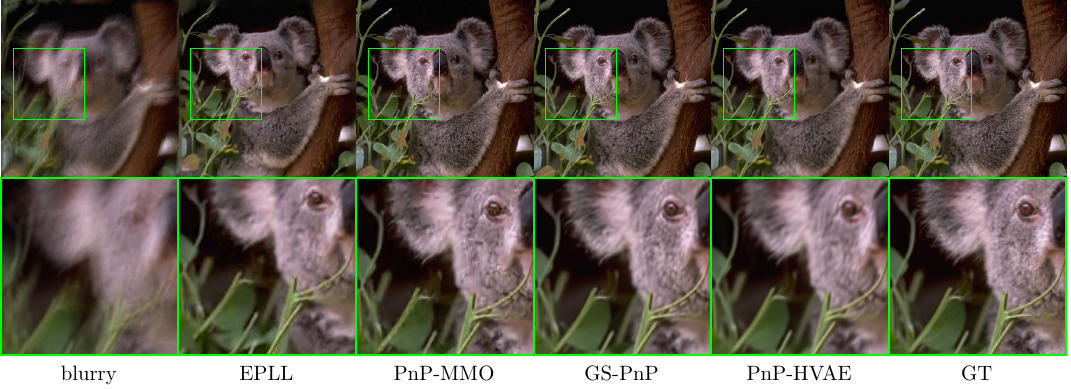}
    \caption{kernel (c), $\sigma=2.55$}
    \end{subfigure}

    \begin{subfigure}[h]{\linewidth}
        \centering
        \includegraphics[width=\fwidth]{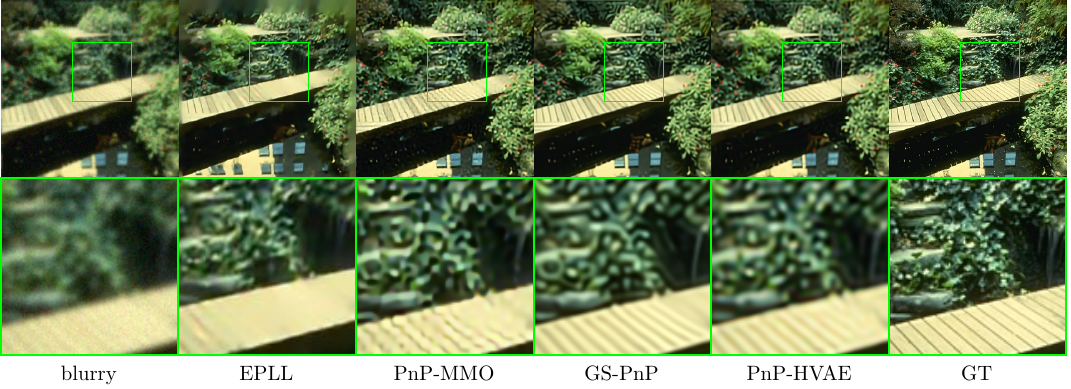}
        \caption{kernel (a), $\sigma=7.65$}
    \end{subfigure}
    \caption{\label{fig:deb1}Deblurring results on BSD}
\end{figure}

\begin{figure}
    \begin{subfigure}[h]{\linewidth}
    \centering
    \includegraphics[width=\fwidth]{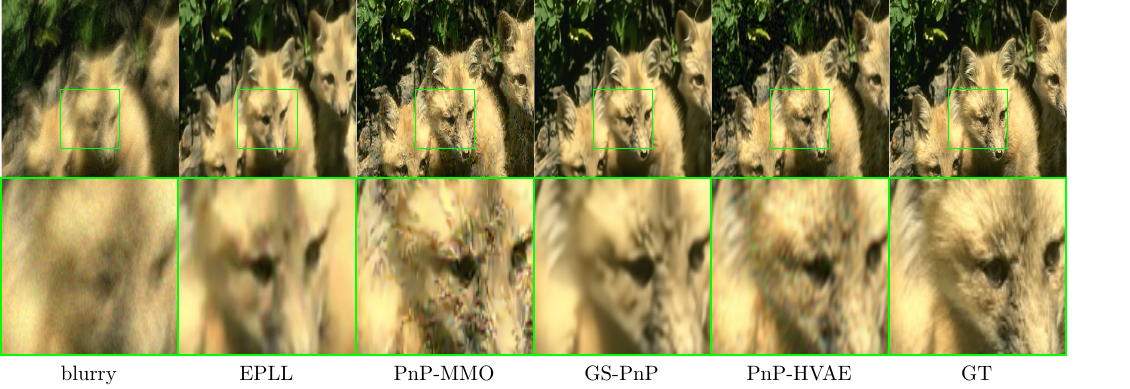}
    \caption{kernel (d), $\sigma=7.65$}
    \end{subfigure}

    \begin{subfigure}[h]{\linewidth}
    \centering
    \includegraphics[width=\fwidth]{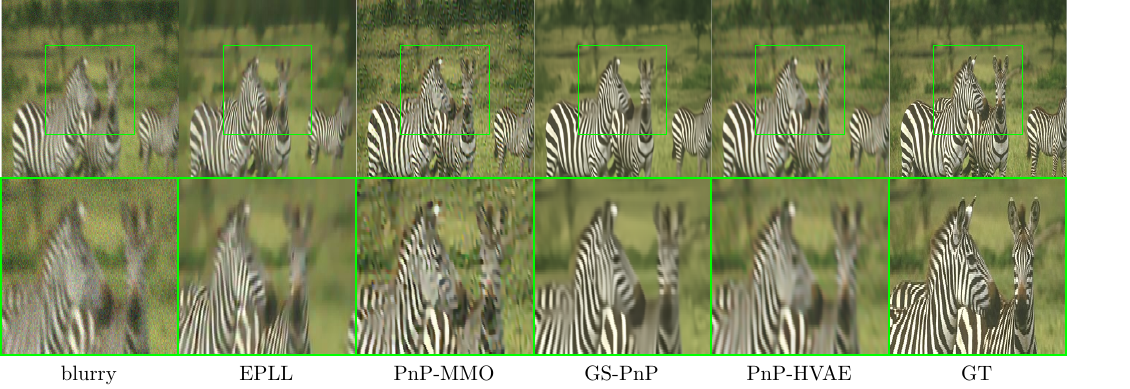}
    \caption{kernel (b), $\sigma=12.75$}
    \end{subfigure}

    \begin{subfigure}[h]{\linewidth}
    \centering
    \includegraphics[width=\fwidth]{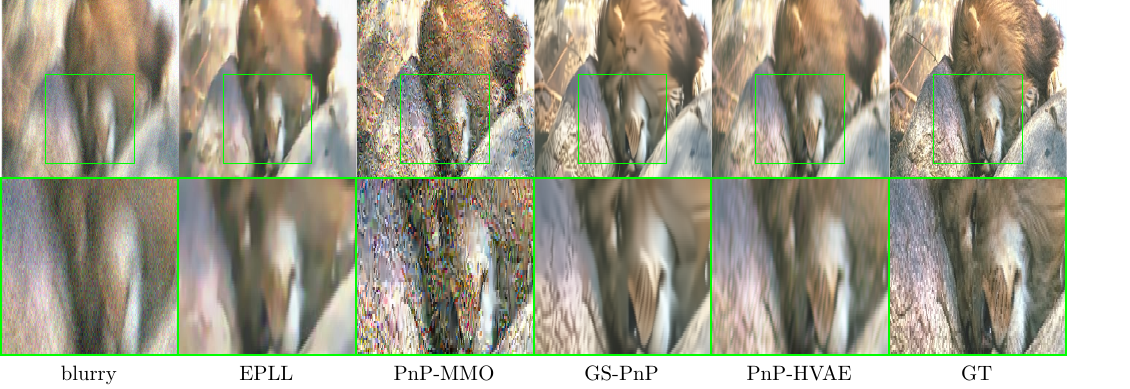}
    \caption{kernel (d), $\sigma=12.75$}
    \end{subfigure}
    \caption{\label{fig:deb2}Deblurring results on BSD}
\end{figure}

\begin{figure}
    \centering
    
    \includegraphics{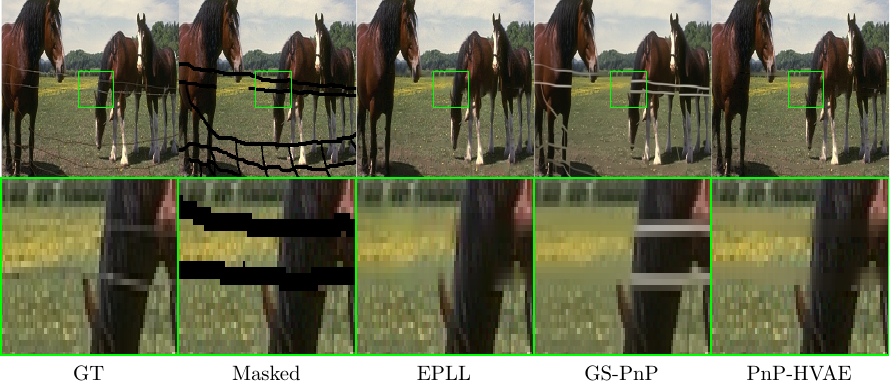}
    \caption{\label{fig:demo_inp}Natural images inpainting}
\end{figure}

\end{document}